\newcommand{\rF}{\mathrm{F}}
\begin{document}

\title{\LARGE\bf Implicit Regularization of Gradient Flow on One-Layer Softmax Attention}

\author{
Heejune Sheen
\and
Siyu Chen
\and
Tianhao Wang
\and
Harrison H. Zhou
\and
{\small\textit{Department of Statistics and Data Science, Yale University}} 
\and
{
    \small\texttt{\{heejune.sheen, siyu.chen.sc3226, tianhao.wang, huibin.zhou\}@yale.edu}
}
}

\maketitle

\begin{abstract}
We study gradient flow on the exponential loss for a classification problem with a one-layer softmax attention model, where the key and query weight matrices are trained separately.
Under a separability assumption on the data, we show that when gradient flow achieves the minimal loss value, it further implicitly minimizes the nuclear norm of the product of the key and query weight matrices.
Such implicit regularization can be described by a Support Vector Machine~(SVM) problem with respect to the attention weights.
This finding contrasts with prior results showing that the gradient descent induces an implicit regularization on the Frobenius norm on the product weight matrix when the key and query matrices are combined into a single weight matrix for training.
For diagonal key and query matrices, our analysis builds upon the reparameterization technique and exploits approximate KKT conditions of the SVM associated with the classification data.
Moreover, the results are extended to general weights configurations given proper alignment of the weight matrices' singular spaces with the data features at initialization.  
\end{abstract}

\section{Introduction}
Transformer-based models have become state-of-the-art in various domains of machine learning applications, especially for tasks related to natural language processing and computer vision \citep{vaswani2017attention,brown2020language,dosovitskiy2021an,khan2022transformers,yuan2021tokens}.
The empirical success of these models relies on the attention mechanism \citep{vaswani2017attention}, a crucial component of transformers that provides the ability to evaluate and then exploit the correlation across tokens in the input sequence.
Such ability unlocks the unprecedented performance of recent large language models that have transformed many academic and industrial fields, as well as many aspects of daily lives \citep{brown2020language,achiam2023gpt,anthropic2023claude,team2023gemini}.
Yet, our understanding of the training of such attention-based models remains limited.

A number of recent studies have initiated the effort to develop a theoretical understanding of the attention mechanism in transformers.
One of the key questions here is to investigate the training process of attention-based models and thus to understand the properties of the trained models.
In particular, for the setting of classification tasks, there are recent results characterizing different aspects of the training dynamics of attention-based models \citep{Li2023HowDT,Jelassi2022VisionTP,Li2023ATU,Tian2023ScanAS,Oymak2023OnTR,Tarzanagh2023MaxMarginTS,tarzanagh_transformers_2023,Deora2023OnTO}.
Interestingly, it was shown by \cite{tarzanagh_transformers_2023,Tarzanagh2023MaxMarginTS} that the attention mechanism has an intriguing connection with a certain support vector machine (SVM) problem.
This connection provides a new perspective on the attention mechanism and has been used to characterize the implicit regularization of gradient descent on the classification loss.
Specifically, for training a one-layer softmax attention model with a fixed linear prediction head, \cite{tarzanagh_transformers_2023} showed that gradient descent implicitly minimizes the Frobenious norm of the attention weights and meanwhile, it maximizes the margin between the attention scores of the optimal tokens and non-optimal ones, which can be described by an associated SVM problem for separating the tokens.

This type of implicit regularization of gradient descent is closely related to a previous line of work on margin-maximization of gradient-based optimization algorithms for neural networks in classification tasks \citep{Soudry2017TheIB,Gunasekar2018ImplicitBO,Gunasekar2018CharacterizingIB,nacson2019lexicographic, nacson2019convergence, Lyu2019GradientDM, ji2018gradient, Ji2020DirectionalCA, Ji2019CharacterizingTI, Moroshko2020ImplicitBI}.
In the context of neural networks without the attention module, the margin measures the separation between different classes by the decision boundary. However, in the case of classification with an attention model, the margin refers to the extent to which the attention mechanism \emph{separates the optimal tokens in the input sequence from the non-optimal ones}.
Notably, the convergence analysis in \citet{tarzanagh_transformers_2023} applies only to the case where the key and query matrices are combined into a single weight matrix, and it remains open how to track the dynamics when the key and query matrices are trained separately.
In the latter case, by analyzing the regularization path associated with the empirical loss, \citet{tarzanagh_transformers_2023} suggested that the gradient descent would instead implicitly minimize the nuclear norm of the combined attention weights.
In this paper, we confirm this by providing a direct analysis on gradient flow for training the attention model with separate key and query matrices.

We focus on a one-layer softmax attention model formulated as follows: Let $X\in\R^{L\times d}$ and $z\in\R^d$ be the input sequence and query respectively, then the output of the model is
\begin{align}
   \Phi(K, Q; X, z) = v^\top X^\top \texttt{Softmax}(X KQ^\top z),
\end{align}
where $K, Q\in\R^{d\times d_e}$ are the key and query weight matrices and $v \in \mathbb{R}^{d}$ is a fixed linear prediction head.
For such a model, our main result is summarized in the following informal theorem.

\begin{theorem}[Informal]
For binary classification with a one-layer softmax attention model and the exponential loss, 
gradient flow on $(K,Q)$ at convergence perfectly separates the optimal token from non-optimal ones for each input sequence. 
Meanwhile, it implicitly minimizes the nuclear norm of the combined attention weights $W=KQ^\top$.
\end{theorem}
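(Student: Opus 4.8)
The plan is to combine the variational (nuclear-norm) reparameterization of the separately trained pair $(K,Q)$ with an approximate-KKT analysis of an associated token-separation SVM, following the template of margin-maximization proofs but adapted to softmax attention and to the non-smoothness of the nuclear norm. Throughout write $W=KQ^\top$, let $x_{it}^\top$ denote the $t$-th row of $X_i$, let $\mathcal{L}(K,Q)=\sum_i \exp\!\big(-y_i\, v^\top X_i^\top \texttt{Softmax}(X_i W z_i)\big)$ be the exponential loss, consider the gradient flow $\dot K=-\nabla_K\mathcal{L},\ \dot Q=-\nabla_Q\mathcal{L}$, and for each sequence $i$ let $\alpha_i=\arg\max_t y_i v^\top x_{it}$ be the \emph{optimal token}, so that $\mathcal{L}^{\star}:=\sum_i \exp(-y_i v^\top x_{i\alpha_i})$ is the infimum of $\mathcal L$ over all softmax-reachable outputs. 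The first structural fact I would use is that the flow conserves $K^\top K-Q^\top Q$ (in the diagonal case, $k_j^2-q_j^2$ for each coordinate $j$), so that from a balanced or vanishingly small initialization the trajectory stays on the manifold where $\tfrac12(\|K\|_F^2+\|Q\|_F^2)$ coincides with, or is within $o(1)$ of, $\|W\|_*$ via the identity $\|W\|_*=\min_{AB^\top=W}\tfrac12(\|A\|_F^2+\|B\|_F^2)$; for diagonal matrices this collapses to the familiar $\ell_1$ picture through the coordinatewise reparameterization $w_j=k_j q_j$, which, under small initialization, biases the dynamics of $w$ toward small $\|w\|_1=\|W\|_*$.

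\emph{Loss minimization and token separation.} Under the separability assumption there is a matrix $W^\star$ with $(x_{i\alpha_i}-x_{it})^\top W^\star z_i>0$ for all $i$ and all $t\neq\alpha_i$. Using a displacement along $W^\star$ as a descent direction — its inner product with $-\nabla_W\mathcal{L}$ is (up to the parameterization) proportional to the suboptimality $\mathcal{L}-\mathcal{L}^{\star}$, since moving along $W^\star$ sharpens each softmax toward $e_{\alpha_i}$ — a standard monotone-descent argument gives $\mathcal L(t)\to\mathcal L^{\star}$. Because this infimum is not attained at finite weights, it forces $\|W(t)\|_*\to\infty$ and $\texttt{Softmax}(X_i W(t) z_i)\to e_{\alpha_i}$ for every $i$: attention asymptotically puts all its mass on the optimal token, which is the "perfect separation" half of the statement.

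\emph{Asymptotic direction equals the SVM solution.} Normalizing $\overline W(t)=W(t)/\|W(t)\|_*$, the goal is to show $\overline W(t)$ converges to a KKT point of
\begin{align}
\min_{W}\ \|W\|_* \qquad\text{s.t.}\qquad (x_{i\alpha_i}-x_{it})^\top W z_i\ \ge\ 1 \quad\text{for all } i,\ t\neq\alpha_i .
\end{align}
The engine is the approximate KKT condition: as each softmax sharpens, $-\nabla_W\mathcal{L}$ pulled back through the $(K,Q)$ parameterization is, up to vanishing error, a nonnegative combination of the competing-token directions $(x_{i\alpha_i}-x_{it})z_i^\top$ with coefficients concentrated on the tokens $t$ nearest to being selected — the support vectors. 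Integrating the flow and using balancedness to identify its direction with an element of $\partial\|\cdot\|_*$ at $\overline W$ yields exact stationarity and complementary slackness in the limit, so $\overline W$ is proportional to the SVM minimizer; hence among all $W$ achieving the same token selection, $\|W(t)\|_*$ is asymptotically minimized, which is the claimed implicit nuclear-norm regularization.

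\emph{Expected main obstacle.} The last step is the crux. The nuclear norm is non-differentiable precisely at the low-rank matrices the SVM solution typically is, so one must control singular-space alignment — show that the row and column spaces of $W(t)$ stabilize and that the \emph{direction}, not merely a subsequence, converges — and this is where the reparameterization and, in the general (non-diagonal) case, the hypothesized alignment of the singular subspaces of $K$ and $Q$ with the data features at initialization become essential. The second difficulty is quantitative: the flow's weights on the competing tokens all vanish but at $i$- and $t$-dependent rates, and matching these rates to the SVM dual variables is what closes the approximate-KKT argument; the diagonal case is handled cleanly by the explicit $\ell_1$ reparameterization, and the general case is then bootstrapped from it under the alignment hypothesis.
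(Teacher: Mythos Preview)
Your high-level plan — reparameterize $(K,Q)$, show the loss converges to $\mathcal L^\star$, then verify approximate-KKT conditions for the nuclear-norm SVM — matches the paper. The conservation of $K^\top K-Q^\top Q$ is also correct and used (implicitly through the $w_\pm$ reparameterization) in the paper. But there is a genuine gap and a key technical difference worth flagging.

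\textbf{The loss-convergence step has a hole.} Your ``standard monotone-descent argument'' using $W^\star$ as a displacement direction only shows that $\langle -\nabla_W\mathcal L, W^\star\rangle>0$, and hence that $\|\nabla_\beta\mathcal L\|\to 0$ along the flow. It does \emph{not} show $\mathcal L\to\mathcal L^\star$: that inner product is, up to constants, $\sum_i \ell_i\cdot \S(g_i)_{\opt(i)}\big(1-\S(g_i)_{\opt(i)}\big)$, which vanishes equally well when some $\S(g_i)_{\opt(i)}\to 0$ as when it tends to $1$. Thus the softmax could in principle collapse onto the \emph{non-optimal} tokens for some $i$, and your descent argument cannot rule this out. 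The paper closes this by imposing (i) Assumption~\ref{item:(A1)} that all non-optimal tokens of sequence $i$ share a common score, and (ii) Assumption~\ref{item:(A2)} that the initial loss is already below the value attained at any such bad attractor; together with monotone decrease of $\mathcal L$, this forces $\S(g_i)_{\opt(i)}\to 1$ for every $i$. Without something like this, the ``perfect separation'' half of the statement does not follow.

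\textbf{The stationarity step is less direct than you suggest.} The paper does not argue via the variational identity $\|W\|_*=\min_{AB^\top=W}\tfrac12(\|A\|_F^2+\|B\|_F^2)$ and balancedness. Instead, in the diagonal case it writes $\beta(t)=w_0^{\odot 2}\odot\sinh(B\phi(t))$ explicitly (via the $w_\pm$ reparameterization), defines the dual variables as the normalized, time-integrated attention weights
\[
\lambda_{il}(t)=\frac{1}{\ln\mu(t)}\,\phi_{il}(t),\qquad \phi_{il}(t)=\frac{2}{n}\int_0^t \ell_i(\tau)\,\S(g_i(\tau))_{\opt(i)}\S(g_i(\tau))_l(\gamma_{i,\opt(i)}-\gamma_{i,\nopt})\,d\tau,
\]
and then reads off stationarity coordinatewise from the $\sinh$ formula: $(B\lambda(t))_j\to\pm 1$ where $\hat\beta_j^\infty\gtrless 0$ and stays in $[-1,1]$ where $\hat\beta_j^\infty=0$. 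Complementary slackness is obtained by comparing softmax-decay rates of support versus non-support tokens. Your sketch (``integrating the flow and using balancedness to identify its direction with an element of $\partial\|\cdot\|_*$'') is the right intuition, but the explicit $\sinh$/dual-variable construction is what actually pins down both stationarity and complementary slackness.

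Finally, note that the paper \emph{assumes} the margin-normalized limit $\hat W^\infty$ exists rather than proving directional convergence; you list singular-space stabilization as an obstacle, and indeed the paper sidesteps it. For general (non-diagonal) $K,Q$ the paper does not use balancedness alone but an additional data assumption (Assumption~\ref{item:(B1)}) together with an alignment property at initialization to reduce exactly to the diagonal case.
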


\paragraph{Main contributions.} Specifically, our contributions are as follows:
\begin{enumerate}
   \item For the exponential loss for binary classification, we analyze the gradient flow for training a one-layer softmax attention model with separate key and query matrices.
   Under proper conditions on the initialization and the data, we show that the empirical loss converges to the infimum loss value as the parameter norm of the attention model grows to infinity.
   Assuming the existence of a certain margin-normalized direction of the combined attention weights $W=KQ^\top$, we then investigate the implicit regularization of the gradient flow.
   
   \item 
   Restricting the key and query matrices to be diagonal, we show that the implicit regularization of gradient flow can be described by a nuclear norm minimizing SVM problem \eqref{eq:WDNSVM} for separating the optimal tokens from the non-optimal ones.
   Our result confirms that when $K$ and $Q$ are trained separately, gradient flow implicitly minimizes the nuclear norm of the combined attention weights, in contrast to the Frobenious norm regularization when the key and query matrices are combined into a single weight matrix.
   Technically, we exploit the reparametrization technique to analyze the intertwined dynamics of $K$ and $Q$, and prove the optimality by verifying the approximate KKT conditions of the associated SVM problem.

   \item We further extend the result to more general weights configurations for the key and query matrices $(K, Q)$.
      We show with additional conditions on data that the gradient flow is equivalent to the diagonal case up to certain rotation transformations, and the convergence point is characterized by \eqref{eq:WNSVM}. 
\end{enumerate}


\section{Related work}

\paragraph{Training dynamics of attention-based models} 
Recent works have studied the training dynamics of attention-based models from various perspectives \citep{Li2023HowDT, Jelassi2022VisionTP, Li2023ATU, Tian2023ScanAS, Oymak2023OnTR, Tarzanagh2023MaxMarginTS, tarzanagh_transformers_2023}.
Specifically, it was shown in \cite{Li2023HowDT} that certain topic structure can be learned by a single-layer transformer trained by gradient descent. 
\cite{Jelassi2022VisionTP} showed that vision transformers (ViT) with softmax attention can learn the spatial structure of the data via gradient descent, and \cite{Li2023ATU} studied the generalization of ViT.
The training dynamics of a single-layer transformer for the next token prediction task was studied in \cite{Tian2023ScanAS}.
The training dynamics of prompt-tuning for attention-based models was investigated in \cite{Oymak2023OnTR}.
See also \citet{zhang_trained_2023, huang_-context_2023,nichani2024transformers,chen2024training,huang2024transformers,thrampoulidis2024implicit} for recent studies on the training dynamics of transformers for in-context learning and beyond.

The most relevant works to ours are \cite{Tarzanagh2023MaxMarginTS,tarzanagh_transformers_2023,vasudeva2024implicit}.
The intriguing connections between the attention mechanism and the support vector machine (SVM) were initially derived in \cite{Tarzanagh2023MaxMarginTS,tarzanagh_transformers_2023}, where the authors characterized the implicit regularization of gradient descent via an associated SVM problem.
\cite{vasudeva2024implicit} extended their results by showing the global directional convergence, as well as the convergence rate, of gradient descent under certain data assumptions. 
Notably, these works considered the case where the key and query matrices are combined into a single weight matrix, while we focus on training the key and query matrices separately.
Though not changing the expressiveness of the attention model, the difference in the architecture indeed leads to significantly different training dynamics and the corresponding implicit regularization.
We also remark that the separate training of the key and query matrices is a common practice \citep{vaswani2017attention}.

\paragraph{Implicit regularization}
There have been many recent studies on the implicit regularization of optimization for neural networks for both regression and classification tasks. 
For the regression settings, it has been shown that gradient descent implicitly minimizes certain parameter norms even without any explicit regularization in the empirical loss \citep{Gunasekar2018CharacterizingIB,azizan2018stochastic,gidel2019implicit,woodworth2020kernel,yun2020unifying,li2021happens,azulay2021implicit,zhou2023implicit,haochen2021shape,fan2023understanding}.
For classification tasks, the margin-maximization phenomenon has been observed and studied for gradient descent on exponential-tailed loss 
\cite{Soudry2017TheIB,Gunasekar2018ImplicitBO,Gunasekar2018CharacterizingIB,nacson2019lexicographic, nacson2019convergence, Lyu2019GradientDM, ji2018gradient, Ji2020DirectionalCA,  Ji2019CharacterizingTI, ji2021fast, yun2020unifying, Moroshko2020ImplicitBI}. 
These results have also been extended to other optimization algorithms \citep{Gunasekar2018ImplicitBO, ji2021fast, Ji2019CharacterizingTI}, different training objectives \citep{nacson2019convergence, Ji2019CharacterizingTI}, and homogeneous neural networks \citep{Gunasekar2018ImplicitBO,Lyu2019GradientDM, nacson2019lexicographic, Ji2020DirectionalCA, ji2018gradient,yun2020unifying, Moroshko2020ImplicitBI}.

Compared to these works, the architecture of the attention model is indeed different.
In particular, the softmax operation makes the model non-homogeneous, and consequently, previous analyses and results on homogeneous models do not directly apply to our model.
Another major distinction is that the implicit regularization of the attention model is characterized by SVM separating the tokens, not the labeled data.
This is due to the special structure of the attention model, which is designed to simultaneously process a sequence of tokens.

\paragraph{Matrix factorization and rank minimization}
Finally, we note that our results are also related to the literature on implicit regularization of gradient descent for matrix factorization.
It has been observed that gradient descent exhibits certain implicit low-rank bias in the context of matrix sensing problems \citep{gunasekar2017implicit,li2018algorithmic,arora2019implicit,li2020towards,stoger2021small}.
\cite{gunasekar2017implicit} showed that, with small initialization, gradient flow on matrix factorization implicitly minimizes the nuclear norm, which was further extended to deep matrix factorization models in \cite{arora2019implicit}.
Later, \cite{li2018algorithmic,li2020towards,stoger2021small} showed that gradient descent with small initialization can recover low-rank solutions for matrix factorization.
While our analysis involves the combined attention weights $W=KQ^\top$ as a product of two parameter matrices and yields similar implicit regularization results as in matrix factorization, our problem setting is completely different.

\section{Preliminaries} \label{sec:Prelim}

We first introduce the conventions and notation used throughout the paper.
For any positive integer $n$, we denote $[n] := \{1, \ldots, n\}$.
For a function $x(t)$ on $t$, we denote by $\dot x(t)$ its derivative with respect to $t$.
For a matrix $X$, we denote by $\|X\|_\rF$ and $\|X\|_*$ its Frobenius norm and nuclear norm, respectively.
For a vector $x\in \R^d$, we write $\Diag(x) \in \R^{d\times d}$ as the diagonal matrix with $x$ being its diagonal entries.
For a matrix $X\in\R^{d\times d}$, we define $\diag(X)\in\R^d$ as the vector containing the diagonal entries.
For vectors $x, y \in \R^d$, $x\odot y$ denotes the Hadamard product between $x$ and $y$, i.e., $x\odot y = (x_1y_1, \ldots, x_dy_d)^\top$, and we write $x^{\odot 2} := x \odot x$. 
For a vector $x\in \R^d$, we define the softmax operation as $\S(x) := (e^{x_1}/\sum_{i=1}^de^{x_i}, \ldots, e^{x_d}/\sum_{i=1}^{d} e^{x_i})^\top$.

\subsection{One-layer transformer for binary classification}\label{sec:preliminary_attention}

We consider the binary classification data $\{(X_{i}, z_{i}, y_{i})\}_{i=1}^n$, where for the $i$-th sample, $(X_{i},z_{i}) \in \R^{L \times d} \times \R^d$ is the feature and $y_{i} \in\{ \pm 1\}$ is the label. 
To solve the classification task, we consider a one-layer softmax attention model with a fixed linear prediction head $v\in\R^d$.
Specifically, let a key matrix $K \in \mathbb{R}^{d \times d_e}$ and a query matrix $Q \in \mathbb{R}^{d \times d_e}$ be the trainable model parameters, where $d_e$ is the embedding dimension.
Denote $\theta=(K,Q)$ for simplicity.
Then, given an input sequence $X\in\R^{L\times d}$ and a query vector $z\in\R^d$, the output of the model is
\begin{align}
    \Phi(\theta; X, z) = v^\top X^\top \S(X KQ^\top z), \label{eq:attn_model}
\end{align}
where the softmax function $\S(\cdot)$ is applied row-wise.
This attention model can be viewed as a simplified one-layer transformer, omitting a feed-forward neural network component.
We refer to $W:=KQ^\top$ as the combined attention weights.

For the classification task, each query vector $z_i$ can be interpreted as a feature that is closely related to the significance of the token within the input sequence $X_i$ for determining the label $y_i$. 
We compute the inner product between the query vector $z_i$ and the embedding of each token in the input sequence $X_i$, and then apply the softmax function to obtain the ``importance'' of each token.
Then by taking the weighted sum of the tokens based on the token importance, the model applies the linear prediction head $v$ to make the final prediction.
Following this intuition, we introduce the notion of token score and optimal token.

\begin{definition}[Token score and optimal token, \citealt{Tarzanagh2023MaxMarginTS,tarzanagh_transformers_2023}]
For the binary classification data $\{(X_{i}, z_{i}, y_{i})\}_{i=1}^n$ where each sequence $X_i = [x_{i1}, \ldots, x_{iL}]^\top \in \R^{L \times d}$, the score of each token $x_{il}$ is defined as $\gamma_{il} := y_i v^\top x_{il}$. 
Moreover, for each $i\in[n]$, the index of an \emph{optimal token} is defined to be $\opt(i) \in \argmax_{ l \in [L]} \gamma_{il}$.
\end{definition}

With a fixed linear prediction head $v$, the token score reflects the importance of each token for predicting the label, with the optimal token having the highest score. 
As such, the performance of the attention model is determined by its ability to separate the optimal token from the non-optimal ones.
Throughout the paper, we consider classification data satisfying the following separability assumption \citep{Tarzanagh2023MaxMarginTS, tarzanagh_transformers_2023,vasudeva2024implicit}. 

\begin{assumption}[Separability]
For the binary classification data $\{(X_{i}, z_{i}, y_{i})\}_{i=1}^n$, the optimal token $\opt(i)$ is unique for all $i\in[n]$.
Furthermore, there exists a matrix $W_* \in \R^{d \times d}$ such that $\left(x_{i,\opt(i)}-x_{il}\right)^{\top} W_* z_{i} \geq 1$ for all $l \neq \opt(i)$, $i\in[n]$.
\end{assumption}

For any token $x_{il}$, the quantity $x_{il}^\top W z_i$ represents the corresponding attention score prior to applying the softmax function.
The separability condition ensures that the attention score of the optimal token $x_{i,\opt(i)}$ is strictly larger than those of the non-optimal tokens.
In particular, it has been shown that the separability assumption holds for data with gaussian noise when $d$ is larger than $L$ and $n$ \citep[Theorem 1]{tarzanagh_transformers_2023}.

\subsection{Gradient flow and implicit regularization}\label{sec:preliminary_gf}
We consider the exponential loss $\ell(x) = \exp(-x)$ for the classification task.
Correspondingly, the empirical loss defined over the data $\{(X_{i}, z_{i}, y_{i})\}_{i=1}^n$ is given by
\begin{align}
\mathcal{L}(\theta) = \frac{1}{n}\sum_{i=1}^n\ell(y_i \cdot \Phi(\theta;X_i,z_i)).
\end{align}
We train the model by minimizing the above objective over $\theta$.
In particular, we consider the gradient flow defined as 
\begin{align}
    \dot{\theta}(t) = - \nabla \LC(\theta(t)). \label{eq:gf}
\end{align}
This is the continuous-time version of the commonly used gradient descent algorithm.

Note that the empirical loss $\LC(\theta)$, when it is trained by the gradient descent method, may not converge to the global infimum value due to its non-convexity.
Let $\mathcal{L}^* \triangleq \frac{1}{n}\sum_{i=1}^n \ell(\gamma_{i,\opt(i)})$,
then 
\begin{align*}
    \cL^* = \inf_{\theta} \cL(\theta) = \inf_{W\in\RR^{d\times d}} \frac{1}{n} \sum_{i=1}^n \exp\bigg(-\sum_{l=1}^L \gamma_{il} \sigma(X_i W z_i)_l\bigg),
\end{align*}
where the infimum is achieved when the combined attention weights $W$ separate the optimal tokens from the non-optimal tokens and the norm $\|W\|_\rF$ diverges to infinity.
For any finite $\theta$, it holds that $\LC(\theta) > \LC^*$ since the softmax scores for non-optimal tokens are always positive, i.e., $\S(X_iKQ^\top z_i)_l >0$ for all $l \neq \opt(i)$ by the nature of softmax function.

To facilitate the convergence analysis of gradient flow, we consider the following assumption.
\begin{assumption}[Assumption B.2, \citealt{tarzanagh_transformers_2023}] \label{item:(A1)}

For each $i\in[n]$, there exists a unique optimal token and the non-optimal tokens have the same scores, i.e., $\gamma_{i ,\opt(i)} > \gamma_{i l}$ and $\gamma_{il} = \gamma_{il'}$ for all $l,l' \neq \opt(i)$, where the non-optimal token score is denoted by $\gamma_{i,\nopt}$.
\end{assumption}

We remark that Assumption \ref{item:(A1)} can be relaxed by allowing small perturbations for non-optimal scores, and a slight modification of the current proof suffices to handle the relaxed condition.
We adhere to Assumption \ref{item:(A1)} in this work for ease of presentation.

Under Assumption \ref{item:(A1)}, it can be shown that the parameter norm diverges to infinity and the empirical loss converges as $t \to \infty$.
In such a regime, we are interested in certain directional convergence of the parameters, and the implicit regularization of the gradient flow corresponds to the specific direction to which the parameters converge.
In particular, for the direction of the combined attention weights $W(t)=K(t)Q(t)^\top$, one can relate it to the following SVM problem:
\begin{equation} \label{eq:WNSVM}
    \tag{$\WNSVM$} 
    \min_{\rank(W)\leq d_e} \|W\|_{*} \quad \text{s.t.} \quad (x_{i,\opt(i)}-x_{i l})^{\top} W z_{i} \geq 1, \forall l \neq \opt(i),i \in [n].
\end{equation}
This formulation was first proposed in \cite{tarzanagh_transformers_2023}. 
The problem  \eqref{eq:WNSVM} includes a non-convex rank constraint on $W$, which is trivially satisfied when $d_e \geq d$, rendering \eqref{eq:WNSVM} a convex problem.

The implicit regularization of gradient flow can be characterized by showing that the direction of the combined attention weights $W(t)$ converges to the solution of \eqref{eq:WNSVM}.
This implies that without any explicit regularization in the empirical loss, the combined attention weights are implicitly regularized to have a small nuclear norm among the feasible points of \eqref{eq:WNSVM}.

\section{Main results for diagonal weights}\label{sec:Diagonal}
We first consider restricting the model parameters $K$ and $Q$ to be diagonal matrices and show that gradient flow converges in the direction to the optimal solution of \eqref{eq:WNSVM} under appropriate assumptions on the data.
The results will then be extended to the case of general weights in Section \ref{sec:general matrix}.
In the rest of this section, we present our main results and proof ideas for diagonal weights and then compare the implicit regularization with prior work.

\subsection{Gradient flow implicitly regularizes nuclear norm}

Under the gradient flow \eqref{eq:gf}, the key and query weight matrices $K(t)$ and $Q(t)$ admit the following dynamics:
\begin{align}
\dot{K}(t) = - \nabla_K \mathcal{L}(K(t), Q(t)), \quad
\dot{Q}(t) = - \nabla_Q \mathcal{L}(K(t), Q(t)).
\end{align}
Recall that $W(t) = K(t)Q(t)^\top$.
It can be shown that $\|W(t)\|_\rF$ diverges as $t \to \infty$ under Assumption~\ref{item:(A1)}.
Therefore, to characterize the directional convergence of $W(t)$, we consider a certain normalized version of $W(t)$, which we call the \emph{margin-normalized direction}.

\begin{definition}\label{def:margin_direction}
For the binary classification data $\{(X_i, z_i, y_i)\}_{i=1}^n$ under Assumption \ref{item:(A1)}, the \emph{margin} of any $W\in\RR^{d\times d}$ is defined as
\begin{align}\label{eq:margin}
    \mu(W) := \min_{i\in[n],l\neq \opt(i)}(x_{i,\opt(i)}-x_{il})^\top Wz_i.
\end{align}
Correspondingly, the \emph{margin-normalized direction} of $W$ is defined as
\begin{align}
    \hat W := \frac{W}{\mu(W)}.
\end{align}
\end{definition}

When $\|W(t)\|_\rF$ diverges, we examine the limit of the margin-normalized direction $\hat W(t) = \frac{W(t)}{\mu(t)}$ as $t\to\infty$, under the assumption that the limit exists. 
Note that, if the limit of $\frac{W(t)}{\|W(t)\|_F}$ exists, then the limit of $\hat W(t)$ shares the same direction. 
We use the normalization factor as a margin to demonstrate that the limit of $\hat W(t)$ is an exact global solution to \eqref{eq:WNSVM}.
In existing works, it is often assumed that the $\ell_2$-normalized direction converges and the margin of the limit is strictly positive, which induces the existence of the limit of margin-normalized direction \citep{Gunasekar2018ImplicitBO,nacson2019lexicographic,Moroshko2020ImplicitBI}.
For homogeneous models, a different set of assumptions can be used to show the convergence of the margin-normalized direction \citep{Ji2020DirectionalCA,Lyu2019GradientDM}.

Consider a subclass of the attention model where the key and query weight matrices admit the following form:
\begin{align}
    K = \begin{pmatrix}
        \xi_{k,1} &0 &\ldots &0 &0 &\ldots &0 \\
        0 &\xi_{k,2} &\ldots &0 &0 &\ldots &0\\
        \vdots &\vdots &\ddots &\vdots &\vdots &\ddots &\vdots\\
        0 &0 &\ldots &\xi_{k,d} &0 &\ldots &0
    \end{pmatrix}, \quad Q = \begin{pmatrix}
        \xi_{q,1} &0 &\ldots &0 &0 &\ldots &0 \\
        0 &\xi_{q,2} &\ldots &0 &0 &\ldots &0\\
        \vdots &\vdots &\ddots &\vdots &\vdots &\ddots &\vdots\\
        0 &0 &\ldots &\xi_{q,d} &0 &\ldots &0
    \end{pmatrix},
\end{align}
where $\xi_k, \xi_q \in \R^d$ are trainable parameters.
Here k and q in the subscript stand for key and query, respectively.
Then the combined weight matrix $W=KQ^\top$ is diagonal, and by letting $\beta = \xi_k \odot \xi_q$, we have 
\begin{align}
    W = \begin{pmatrix}
        \beta_{1} &0 &\ldots &0\\
        0 &\beta_{2} &\ldots &0\\
        \vdots &\vdots &\ddots &\vdots\\
        0 &0 &\ldots &\beta_{d}
    \end{pmatrix}.
\end{align}
For the diagonal weights, it suffices to consider the dynamics of only $(\xi_k(t),\xi_q(t))$, and the corresponding margin-normalized direction, $\hat W(t)$, can be characterized by $\hat\beta(t) = \beta(t) / \mu(t)$.
Accordingly, we consider \eqref{eq:WNSVM} with an additional constraint that $W$ is diagonal.
We introduce diagonal SVM:
\begin{equation} \label{eq:WDNSVM}
    \tag{$\WDNSVM$} 
    \min_{\substack{\rank(W)\leq d_e, \\ W \text{ is diagonal} }} \|W\|_{*} \quad \text{s.t.} \quad (x_{i,\opt(i)}-x_{i l})^{\top} W z_{i} \geq 1 \quad \forall l \neq \opt(i),i \in [n].
\end{equation}
When $d_e \geq d$, \eqref{eq:WDNSVM} can be expressed as following vector-form $\ell_1$- SVM: 
\begin{equation} \label{eq:BLSVM}
    \tag{$\ell_1$-SVM} \min _{\beta \in \mathbb{R}^{d}} \|\beta\|_{1} \quad \text{s.t.} \quad  \beta^{\top} (\underbrace{z_i \circ (x_{i,\opt(i)}-x_{il})}_{B_{il}}) \geq 1  \quad \forall \  l \neq \opt(i), \ i \in [n],
\end{equation}
where we denote $B_{i l}:=z_i \circ (x_{i,\opt(i)}-x_{il})$.

To analyze the dynamics of $\beta$, we consider an alternative parametrization $w =\frac{w_{+}^{\odot 2}-w_{-}^{\odot 2}}{2}$ with
parameters $w_{+}, w_{-} \in \R^d$. 
These two parametrizations are equivalent and have been studied as the so-called diagonal linear network in existing literature \citep{woodworth2020kernel,Moroshko2020ImplicitBI,vaskevicius2019implicit,li2021implicit,haochen2021shape,yun2021a,li2022what,li2022implicit,chou2023more,wind2023implicit,even2024s,pesme2024saddle}. 
For example, we can reparametrize $\beta$ with $\xi_k = (w_+ + w_-)/\sqrt 2, \xi_q = (w_+ - w_-)/\sqrt 2$.
More specifically, we have the following lemma on the equivalence of the two parametrizations.

\begin{restatable}{lem}{equivalenceparameterization}
\label{lem:equivalenceparameterization}
    For the loss $\mathcal{L}(\xi_k, \xi_q) = \frac{1}{n}\sum_{i=1}^n\ell(y_iv^\top X^\top \S(X \diag(\beta) z))$, consider the gradient flow on $\xi_k$ and $\xi_q$:
    \begin{align}\label{xi_dynamics}
        \dot{\xi}_k(t) = - \nabla_{\xi_k} \mathcal{L}(\xi_k(t), \xi_q(t)), \quad
         \dot{\xi}_q(t) = - \nabla_{\xi_q} \mathcal{L}(\xi_k(t), \xi_q(t)).
    \end{align}
    Moreover, for the loss $\mathcal{L}(w_{+}, w_{-}) = \frac{1}{n}\sum_{i=1}^n\ell(y_iv^\top X^\top \S(X \diag(w) z))$, consider the gradient flow on $w_{+}$ and $w_{-}$:
    \begin{align} \label{w_dynamics}
        \dot{w}_{+}(t) = - \nabla_{w_{+}} \mathcal{L}(w_{+}(t), w_{-}(t)), \quad
         \dot{w}_{-}(t) = - \nabla_{w_{-}} \mathcal{L}(w_{+}(t), w_{-}(t))
    \end{align}
    Suppose the following holds at the initialization:  
    \begin{align}
         \beta(0)=w(0), \quad \xi_{k}^{\odot 2}(0)+\xi_{q}^{\odot 2}(0) =w_{+}^{\odot 2}(0)+w_{-}^{\odot 2}(0). \label{eq:init_condition_main0}
    \end{align}
    Then, the dynamics of $\beta$ and $w$ are equivalent, i.e., $\beta(t) = w(t)$ for all $t \geq 0$.
\end{restatable}

The alternative parametrization in $(w_+, w_-)$ is useful for analyzing the gradient flow dynamics of $\beta$ as we can formulate the dynamics of $w$ as described in Appendix \ref{sec:beta_dynamics_A}.  
To employ the results of Lemma \ref{lem:equivalenceparameterization}, we consider the initialization $\xi_k(0)$ and $\xi_q(0)$ satisfying the conditions \eqref{eq:init_condition_main0}.
Additionally, we ensure that $\beta(0) = 0$ and that all entries of $\xi_{k}^{\odot 2}(0)+\xi_{q}^{\odot 2}(0)$ are nonzero. 
More precisely, for any vector $\omega_0 = w_+(0) = w_-(0)$, where $\omega_{0,j} \neq 0$ for all $j \in [d]$, 
we align the initialization by letting $\xi_k(0)$ and $\xi_q(0)$ satisfy
\begin{align}
    \xi_k(0) \odot \xi_q(0) = 0, \quad \xi_{k}(0)^{\odot 2}+\xi_{q}(0)^{\odot 2} = 2\omega_0^{\odot2}. \label{eq:init_condition_main}
\end{align}
Setting $\beta(0) = 0$ ensures that there is no directional bias at initialization. 
Moreover, we require all entries of $\xi_{k}^{\odot 2}(0)+\xi_{q}^{\odot 2}(0)$ to be nonzero so that the gradient flow would not be stuck at initialization.

\begin{restatable}{lem}{divergenceGF}
\label{lem:divergenceGF}
Under Assumption \ref{item:(A1)}, suppose the initialization $\xi_k(0)$ and $\xi_q(0)$ satisfy \eqref{eq:init_condition_main} for the dynamics of $\xi_k(t)$ and $\xi_q(t)$ in Lemma~\ref{lem:equivalenceparameterization}.
Then $\lim_{t \to \infty}\|\beta(t)\|_2 = \lim_{t\to\infty} \|\xi_k(t)\odot\xi_q(t)\|_2 = \infty$. 
\end{restatable}
The proof of Lemma \ref{lem:divergenceGF} is given in Appendix \ref{sec:proof_lem2}.
Lemma \ref{lem:divergenceGF} implies that gradient flow does not converge to a finite point.
Instead, we consider the convergence of the margin-normalized direction $\hat\beta(t) = \beta(t) / \mu(\beta(t))$ as $t\to\infty$.
Before we present our results, we introduce another assumption on the loss value at initialization.  
\begin{assumption} \label{item:(A2)} The loss at initialization is upper bounded as follows:
    \begin{align} 
    \mathcal{L}(\theta(0)) \leq \min_{j \in[n]}\left\{ \frac{\exp (-\gamma_{j,\nopt})+\sum_{i\neq j} \exp \left(-\gamma_{i,\opt(i)}\right)}{n} \right\}.
    \end{align}
\end{assumption}

Assumption \ref{item:(A2)}  ensures that the initial loss is small enough. 
A similar assumption was used in \cite{Ji2020DirectionalCA,Lyu2019GradientDM} to analyze the directional convergence of
gradient flow on homogeneous neural networks. 
If we start with $\beta(0) = 0$, Assumption \ref{item:(A2)}  holds when the gap between the optimal token score and the non-optimal token score is sufficiently large.
Now, we are ready to state the main results for the diagonal weights.

\begin{restatable}{thm}{diagonaltheorem} 
   \label{thm:diagonaltheorem}
    Suppose $d_e \geq d$, and consider the gradient flow dynamics of $\xi_k(t) \in \R^d$ and $\xi_q(t)\in \R^d$ in \eqref{xi_dynamics} with initialization satisfying \eqref{eq:init_condition_main}. 
    Under Assumptions \ref{item:(A1)} and \ref{item:(A2)}, suppose the limit $\hat \beta^\infty := \lim_{t\to\infty} \hat\beta(t)$ exists.
    Let $K(t) = \diag(\xi_k(t))$ and $Q(t) = \diag(\xi_q(t))$, correspondingly $W(t) = \diag(\beta(t))$.
    Then $\hat W^\infty = \diag(\hat\beta^\infty)$ is a global solution of \eqref{eq:WDNSVM}.
\end{restatable}

The key message delivered by the above theorem is two-fold:
\begin{enumerate}
    \item \textbf{Global convergence of loss}. 
    Indeed, we first show that under gradient flow, the limit $\hat\beta^\infty$  is a global solution of \eqref{eq:BLSVM}. 
    Note that for the diagonal weights, \eqref{eq:BLSVM} is equivalent to \eqref{eq:WDNSVM} as mentioned earlier. 
    The theorem shows that if the limit of the margin-normalized direction exists, then the limit must be a globally optimal direction such that gradient flow achieves minimal loss as $t\to\infty$. 
    More precisely, the gradient flow converges in the direction of a feasible diagonal matrix $W_f$ of \eqref{eq:WDNSVM} that separates the optimal tokens from the non-optimal tokens for all input sequences. 
    This further implies that $\lim_{t\to\infty} \cL(K(t),Q(t)) = \cL^*.$
    \item \textbf{Implicit regularization} It is shown that gradient flow implicitly regularizes the nuclear norm of combined attention weights. 
    In the case of diagonal weights, this would potentially give rise  to a sparse $\hat \beta^\infty$ as minimizing $\ell_1$-norm induces sparsity. Accordingly, the diagonal matrix $\hat W^\infty = \diag(\hat\beta^\infty)$ is encouraged to be a low-rank matrix.
\end{enumerate}

The detailed proof of Theorem \ref{thm:diagonaltheorem} can be found in Appendix \ref{sec:proof_Thm1}. 
Below we provide the proof sketch to illustrate the main ideas.
\begin{proof}[Proof sketch of Theorem \ref{thm:diagonaltheorem}]

By Lemma \ref{lem:equivalenceparameterization}, we analyze the alternative parametrization $w$ which mirrors the dynamics of the original parametrization $\beta$ under the initialization conditions. 
We introduce a shorthand $\mu(t) \equiv \mu(\beta(t))$.
By taking integral from $0$ to $t$ on \eqref{w_dynamics}, we have 
\begin{align}
& w_{+}(t)=w_0 \circ \exp \left(\frac{1}{n} \int_{0}^{t} \sum_{i=1}^{n} \exp \left(-\gamma_{i}^{\top} \S\left(g_{i}(\tau)\right) \right) \diag(X_i^{\top} \Sigma(g_i(\tau)) \gamma_i z_i^{\top}) \diff \tau \right), \\
& w_{-}(t)=w_0 \circ \exp \left(-\frac{1}{n} \int_{0}^{t} \sum_{i=1}^{n} \exp \left(-\gamma_{i}^{\top} \S\left(g_{i}(\tau)\right) \right) \diag(X_i^{\top} \Sigma(g_i(\tau)) \gamma_i z_i^{\top}) \diff \tau\right),
\end{align}
where  $w_0 := w_{+}(0) = w_{-}(0) \in \R^d$ is an initialization such that $w_{0,j} \neq 0$ for all $j \in [d]$.
Here, we define
$\Sigma\left(g_{i}\right) :=\text { Diag }\left(\S\left(g_{i}\right)\right)-\S\left(g_{i}\right) \S\left(g_{i}\right)^{\top} \in \mathbb{R}^{L \times L}$,
$\gamma_{i}= y_{i} X_i^\top v \in \R^L$, and  $g_i(t) := X_i\diag(\beta(t))z_i \in \R^L$ for $t\geq 0$. 
Then, the dynamics of the margin-normalized direction $\hat \beta(t) = \beta(t)/\mu(t)$ can be expressed as
\begin{align} \label{eq:beta_tilde_w}
    \hat \beta(t)  =\frac{w_0^{\odot 2}}{\mu(t)} \odot \sinh \bigg(- \frac{2}{n} \sum_{i=1}^{n}\sum_{l\neq \opt(i)}  \int_{0}^{t} \exp (-\gamma_{i}^{\top} \S(g_{i}(\tau)))\left(\Sigma\left(g_{i}(\tau)\right) \gamma_{i}\right)_{l} B_{il} \diff \tau\bigg).
\end{align}

Our goal is to show that $\hat \beta^\infty$  satisfies the KKT conditions of \eqref{eq:BLSVM}:
There exist $\lambda_{il} \geq 0$  for all $i \in [n]$ and  $l \neq \opt(i)$, such that 
\begin{enumerate}[(a)]
    \item (Primal feasibility) for all $i\in[n]$ and any $l\neq\opt(i)$, $\beta^\top B_{i l} \geq 1$; \label{item:primal_feasibility}
    \item (Stationarity)  $\sum_{i\in[n], l\neq \opt(i)} \lambda_{i l} B_{i l} \in \partial\|\beta\|_{1}$, where $\partial\|\beta\|_{1}$ denotes the Clarke subdifferential; \label{item:stationarity}
    \item (Complementary slackness) for all $i\in[n]$ and any $l\neq\opt(i)$, $\lambda_{i l}(1-\beta^\top B_{i l})=0$. \label{item:complementary_slackness}
\end{enumerate}
Since $\hat \beta^\infty  = \lim_{t \rightarrow \infty}\hat\beta(t)$, we show that for all sufficiently large $t$, $\hat\beta(t)$ approximately satisfies the KKT conditions.

For the primal feasibility condition \ref{item:primal_feasibility}, we employ \eqref{eq:beta_tilde_w} to derive that the empirical loss decreases over time. Furthermore, we show that the loss converges to the minimal loss value $\cL^*$ under Assumption \ref{item:(A2)}, which then implies the primal feasibility of $\hat\beta(t)$ for sufficiently large $t$.  

For the stationarity condition \ref{item:stationarity}, given that the objective function of \eqref{eq:BLSVM} is a non-smooth function,
we carefully construct $\lambda_{il}(t)\geq 0$ for $\hat\beta(t)$ by analyzing \eqref{eq:beta_tilde_w} as follows.
For all $l \neq \opt(i)$, $i \in [n]$, and $t \geq 0$, we let
\begin{align} \label{eq:lambda_t}
    \lambda_{i l}(t):=\frac{-2}{n \ln \mu(t)} \int_{0}^{t} \exp \left(-\gamma_{i}^\top \S\left(g_{i}(\tau)\right)\right)\left(\Sigma\left(g_{i}(\tau)\right) \gamma_{i}\right)_{l} \diff \tau.
\end{align} 
Then, we show that $\sum_{i\in[n], l\neq \opt(i)} (\lambda_{i l}(t) B_{i l})_j \rightarrow 1$ (or $-1$) when the $j$-th entry of $\hat \beta^\infty$ is positive (or negative), respectively. Moreover, for the entries of $\hat \beta^\infty$ that are zero, the corresponding entries of $\sum_{i\in[n], l\neq \opt(i)}\lambda_{i l}(t) B_{i l}$ are guaranteed to be in $[-1, 1]$ for sufficiently large $t$.

Regarding the complementary slackness condition \ref{item:complementary_slackness}, we show that for all $i\in [n]$, $l \neq \opt(i)$ and sufficiently large $t$, $\lambda_{i l}(t)$ is bounded from above: 
\begin{align}
     \lambda_{i l}(t) \leq C \frac{ \int_{0}^{t} \S_{l}\left(g_{i}(\tau)\right) \diff \tau}{\int_{0}^{t} \sum_{i=1}^{n} \sum_{l \neq \opt(i)} \S_{l}(g_{i}(\tau)) \diff \tau},
\end{align}
where $C >0$ is some constant. Then, we demonstrate that $\lambda_{il}(t)\rightarrow 0$  as $t \rightarrow \infty$ for all $i \in [n],l \neq \opt(i)$ such that $(\hat \beta^\infty)^\top B_{i l}>1$. It can be obtained by comparing the ratio between  $\lambda_{il}(t)$ and $\lambda_{i'l'}(t)$, where indices $i'$ and $l'$ satisfy $(\hat \beta^\infty)^\top B_{i' l'}=1$.

Combining the above arguments, $\hat\beta(t)$ approaches to a point satisfying the KKT conditions as $t$ increases. 
Finally, we apply the KKT approximation point theorem in \cite{Dutta2013ApproximateKP} to get that $\hat \beta^\infty$ satisfies the KKT conditions.  
\end{proof}

\subsection{Comparison with implicit Frobenius norm regularization}

The implicit regularization of gradient descent on $W$ was derived in \cite{tarzanagh_transformers_2023} under Assumption \ref{item:(A1)} with  appropriate initialization, and further studied in \citet{vasudeva2024implicit}.
Specifically, for the loss $\mathcal{L}(W)$ $= \frac{1}{n}\sum_{i=1}^n\ell(y_i\Phi(W;X_i,z_i))$, they showed that gradient descent on $W$ converges in the direction to the solution to the following SVM problem:
\begin{equation}  \label{eq:WFSVM}
    \tag{$\WFSVM$}
    \min_{W \in \mathbb{R}^{d \times d}} \|W\|_{F} \quad \operatorname{s.t.} \quad (x_{i,\opt(i)}-x_{i l})^{\top} W z_{i} \geq 1, \ \forall l \neq \opt(i), \ i \in [n]. 
\end{equation}
Notably, comparing \eqref{eq:WFSVM} and \eqref{eq:WNSVM}, we observe that gradient descent on the combined weight matrix $W$ implicitly minimizes the Frobenius norm of $W$, while gradient flow on $(K,Q)$ implicitly minimizes instead the nuclear norm of $W=KQ^\top$ by Theorem~\ref{thm:diagonaltheorem}.
This suggests that gradient descent exhibits different implicit regularization behavior under different parametrizations of the attention weights.

Similar discrepancy in implicit regularization under different model parametrizations has also been observed in the context of linear model.
Specifically, given a dataset $\{(x_i,y_i)\}_{i=1}^n$, where $x_i \in \R^d$, $y_i \in \R$, the parameter $\beta \in \R^d$ of the linear regression model is learned by minimizing the squared loss $\frac{1}{n}\sum_{i=1}^n (y_i-x_i^\top \beta)^2$. 
Here, each $y_i = x_i^\top\beta^* +\epsilon_i$, where $\beta^* \in \R^d$ is a  true parameter, and $\epsilon_i \overset{\text{iid}}{\sim} \NC(0,1)$ is a Gaussian noise. 
When the model is overparametrized, i.e., $d > n$, it is known that gradient descent on $\beta$ space implicitly minimizes the $\ell_2$ norm of $\beta$ \citep{Gunasekar2018CharacterizingIB}. 
In contrast, if we consider the diagonal linear network where $\beta$ is parametrized as $\beta = w_+^{\odot 2}-w_-^{\odot 2}$, then for a small initialization, it has been shown that gradient descent on $(w_+,w_-)$ induces an implicit regularizer on $\beta$ that behaves like the $\ell_1$ norm \citep{woodworth2020kernel,vaskevicius2019implicit,chou2023more}. 
See also a general framework for understanding the implicit regularization of gradient descent on reparametrized models in \cite{li2022implicit}.

These results indicate how the choice of model parametrization can impact the training process, which further affects the generalization performance of the learned model.
In particular, our results offer insights into the intricate properties of the attention mechanism, where in practice, the key and query matrices are indeed trained separately instead of being combined into a single weight matrix \citep{vaswani2017attention}.
Note that the nuclear norm is a convex relaxation of the rank function, thus the nuclear norm minimization in \eqref{eq:WNSVM} encourages a low-rank structure of the combined attention weights.
Our findings are consistent with the empirical evidence demonstrating that pre-trained transformer weights tend to be low-rank \citep{aghajanyan2020intrinsic,wang2020linformer,choromanski2020rethinking}.

\section{Main results for general weights} \label{sec:general matrix}
We proceed to study the gradient flow for the general case where the key and query matrices $K\in \R^{d\times d_e}$ and $Q\in \R^{d\times d_e}$ are not necessarily diagonal. 
Below we first introduce some additional assumptions on the binary classification data needed for general weights configurations, and then present our main result.

For general weights configurations, the dynamics of individual entries of $K$ and $Q$ exhibit intricate dependencies on other entries and data, making their analysis more challenging.
In this case, we consider the following assumption for the data $\{(X_{i}, z_{i}, y_{i})\}_{i=1}^n$. 

\begin{assumption} \label{item:(B1)} 
    The binary classification data $\{(X_{i}, z_{i}, y_{i})\}_{i=1}^n$ satisfies the following properties:
    \begin{enumerate}[label=(\roman*),ref=(\roman*)]
    \item For all $i \neq j$, $x_{i,\opt(i)}^{\top} x_{j,\opt(j)} = 0$ and $z_{i}^{\top} z_{j} = 0$. \label{assump:orthogonality}
    \item For all $i\in[n]$, $|x_{i,\opt(i)}^\top z_i| = \|x_{i,\opt(i)}\|_2\|z_i\|_2$. \label{assump:parallel}
    \item 
    For each $i\in[n]$, the set of the indices of non-optimal tokens $[L]\setminus \{\opt(i)\}$ can be partitioned into a pairing $\{(x_{i,l_{2k-1}}, x_{i,l_{2k}})\}_{k=1}^{(L-1)/2}$, where for each of the pairs, there exists a constant $c\in [-1, 1)$ such that
    \begin{align*}
        x_{i, \opt(i)}^\top (x_{i,l_{2k-1}} - x_{i,l_{2k}}) =0, \quad
        x_{i,l_{2k-1}} + x_{i,l_{2k}} = 2 c x_{i,\opt(i)}.
    \end{align*}
    \label{assump:pairing}
    \end{enumerate}
\end{assumption}
In the above assumption, condition \ref{assump:orthogonality} states that the optimal tokens of different sequences are orthogonal to each other, and the query tokens are also orthogonal to each other.
Condition \ref{assump:parallel} states that for each sequence, the query token and the optimal token are parallel.
In addition, condition \ref{assump:pairing} implies that for each sequence, the non-optimal tokens can be paired such that their projections onto the optimal token are equal, and their sum is in the direction of the optimal token.

Moreover, by the orthogonality of the optimal tokens and query tokens, we can formulate an alignment property of the key and query matrices $K$ and $Q$, defined as follows.
Under Assumption~\ref{item:(B1)}, there exist two orthogonal matrices $U_K, U_Q\in\RR^{d\times d}$, whose columns are denoted by $\{u^k_1, \ldots, u^k_d\}$ and $\{u^q_1, \ldots, u^q_d\}$ respectively, such that for all $i\in[n]$, 
\begin{align}\label{eq:orthogonal_token}
    \frac{x_{i,\opt(i)}}{\|x_{i,\opt(i)}\|_2} = u_{\pi(i)}^k \in \{u^k_1, \ldots, u^k_d\}, \quad \frac{z_i}{\|z_i\|_2} = u_{\pi(i)}^q \in \{u^q_1, \ldots, u^q_d\},
\end{align}
for some one-to-one mapping $\pi:[n]\to[d]$.

\begin{definition}{(Alignment Property)}\label{def:alignment_main}
Under Assumption~\ref{item:(B1)} on the binary classification dataset $\{(X_{i}, z_{i}, y_{i})\}_{i=1}^n$, let $U_K,U_Q\in\RR^{d\times d}$ be orthogonal matrices such that \eqref{eq:orthogonal_token} holds for all $i\in[n]$.
We say that the key and query weight matrices $K\in\RR^{d\times d_e}$ and $Q\in\RR^{d\times d_e}$ satisfy \emph{the alignment property} with respect to $\{(X_{i}, z_{i}, y_{i})\}_{i=1}^n$ if $K$ and $Q$ can be decomposed into the following structure: 
\begin{align}
    K = U_K \Lambda_K V^\top, \quad Q = U_Q \Lambda_Q V^\top,
\end{align}
where $V\in\RR^{d_e\times d_e}$ is an orthogonal matrix and $\Lambda_K,\Lambda_Q\in\RR^{d\times d_e}$ are diagonal matrices.
\end{definition}
Definition \ref{def:alignment_main} requires that optimal (query) tokens align with the left eigenvectors of $K$ ($Q$), respectively.
When the initialization $K(0)$ and $Q(0)$ satisfy the alignment property with respect to $\{(X_{i}, z_{i}, y_{i})\}_{i=1}^n$, we can show that the alignment is preserved along the whole trajectory, thus significantly simplifying the dynamics of $K(t)$ and $Q(t)$. 
This is formalized in the following lemma.

\begin{restatable}[Gradient flow preserves alignment]{lem}{alignment}\label{lem:alignment}
Under Assumption~\ref{item:(B1)}, suppose the initialization of the gradient flow satisfies the alignment property with respect to $\{(X_{i}, z_{i}, y_{i})\}_{i=1}^n$ in the sense of Definition~\ref{def:alignment_main}, that is, 
\begin{align}
    K(0) = U_K \Lambda_K(0) V^\top, \quad Q(0) = U_Q \Lambda_Q(0) V^\top.
\end{align}
Then for all $t\geq 0$, $K(t)$ and $Q(t)$ given by the gradient flow \eqref{eq:gf} can be decomposed into
\begin{align}
    K(t) = U_K \Lambda_K(t) V^\top, \quad Q(t) = U_Q \Lambda_Q(t) V^\top.
\end{align}
\end{restatable}

Based on the characterization given by Lemma \ref{lem:alignment}, we can analyze the implicit regularization of the gradient flow for general weights. The main result is presented in the following theorem.

\begin{restatable}{thm}{generaltheorem} 
\label{thm:generaltheorem}
Under Assumptions \ref{item:(A1)}, \ref{item:(A2)}, and \ref{item:(B1)}, suppose that the limit $\hat W^\infty \triangleq \lim_{t \rightarrow \infty} \hat W(t)$ exists. 
    Assume that the initialization of the gradient flow satisfies the alignment property with respect to $\{(X_{i}, z_{i}, y_{i})\}_{i=1}^n$ in the sense of Definition~\ref{def:alignment_main}, that is, 
    \begin{align}
        K(0) = U_K \Lambda_K(0) V^\top, \quad Q(0) = U_Q \Lambda_Q(0) V^\top,
    \end{align}
    where $d_e \geq d$.  
    Let $\Lambda_K(t) = \diag(\xi_k(t))$, $\Lambda_Q(t) = \diag(\xi_q(t))$ and $\beta(t) =\xi_k(t)\odot\xi_q(t)$ such that $(\xi_k(0), \xi_q(0))$ satisfy \eqref{eq:init_condition_main}.
    Then, the gradient flow limit  $\hat W^\infty = U_K\diag(\hat\beta^\infty)U_Q^\top$ is a global optimal solution of \eqref{eq:WNSVM}.
\end{restatable}

Theorem \ref{thm:generaltheorem} shows that under appropriate conditions, 
the results of Theorem \ref{thm:diagonaltheorem} can be extended to general matrices $K$ and $Q$.
Similar to the case of diagonal weights, the empirical loss achieved by the gradient flow converges to the global infimum value $\LC^*$ as $t\to\infty$, and the gradient flow also implicitly regularizes the nuclear norm of the combined attention weights.
We conclude this section by providing a proof sketch of Lemma~\ref{lem:alignment} and Theorem \ref{thm:generaltheorem}, and the detailed proofs are deferred to Appendix~\ref{sec:alignment_lem_proof} and Appendix~\ref{sec:proof_Thm2}.

\begin{proof}[Proof sketch of Theorem \ref{thm:generaltheorem}]
We first present the main idea of the proof for the preservation of the alignment property along the gradient flow trajectory (Lemma \ref{lem:alignment}), and then utilize the simplified dynamics to show the optimality of $\widehat W^\infty$.

By direct calculation, we have
\begin{align}
    \dot{K}(t) &= -\nabla_{K} \mathcal{L}(K(t), Q(t))=\frac{1}{n} \sum_{i=1}^{n} \exp (-\gamma_{i}^{\top} \S(g_{i}(t))) X_{i}^{\top} \Sigma(g_{i}(t)) \gamma_{i} z_{i}^{\top} Q(t), \\
    \dot{Q}(t) &=-\nabla_{Q} \mathcal{L}(K(t), Q(t))=\frac{1}{n} \sum_{i=1}^{n} \exp (-\gamma_{i}^{\top} \S(g_{i}(t))) z_{i} \gamma_{i}^{\top} \Sigma(g_{i}(t)) X_{i} K(t),
\end{align}
where $ \Sigma\left(g_{i}\right)=\text { Diag }\left(\S\left(g_{i}\right)\right)-\S\left(g_{i}\right) \S\left(g_{i}\right)^{\top}$ and $g_i(t) = X_iK(t)Q(t)^\top z_i$.
Denote a vector $\psi_i \triangleq X_{i}^{\top} \Sigma\left(g_{i}\right) \gamma_{i} \in \R^{d}$ for each $i\in[n]$.
Observe that the singular vector spaces of $K(t)$ and $Q(t)$ are invariant along the gradient flow trajectory when the alignment property in Definition~\ref{def:alignment_main} is satisfied at initialization, in the sense that ${\psi_i}/{\|\psi_i\|_2} = u^k_{\pi(i)}$ for all $i\in[n]$.
Hence, it suffices to verify this property.

For any $i\in[n]$, $\psi_i$ can be expressed as
\begin{align}
    \psi_i = \sum_{l\in[L], l \neq \opt(i)} (x_{i,\opt(i)}-x_{il}) \S(g_{i})_{\opt(i)}\S\left(g_{i}\right)_l (\gamma_{i,\opt(i)}-\gamma_{i,\nopt}). \label{eq:psi_m}
\end{align}
When $K$ and $Q$ satisfy the alignment property with data $\{(X_{i}, z_{i}, y_{i})\}_{i=1}^n$,
it follows from Assumption \ref{item:(B1)} (iii) that for each $x_{il}$, there exists a unique $l'\in[L]$ such that 
\begin{align}
     x_{il}^\top W z_i \!&=\! x_{il}^\top U_K \Lambda_K \Lambda_Q U_Q^\top z_i \! =\! x_{il}^\top \frac{x_{i,\opt(i)}}{\|x_{i,\opt(i)}\|_2} \text{diag}(\Lambda_K \Lambda_Q)_{\pi(i)} \!=\! x_{il'}^\top \frac{x_{i,\opt(i)}}{\|x_{i,\opt(i)}\|_2} \text{diag}(\Lambda_K \Lambda_Q)_{\pi(i)} \!\!= \! x_{il'}^\top W z_i.   \label{eq:softmax_equal_m}
\end{align}
This implies that the paired $x_{il}$ and $x_{il'}$ have the same attention probability given by the softmax operation, i.e., $\S(g_{i})_l$ $=\S(g_{i})_{l'}$.
Then, again by Assumption \ref{item:(B1)} (iii),  all components of $x_{il}$ and $x_{il'}$ in the directions orthogonal to $x_{i,\opt(i)}$ can be canceled out in \eqref{eq:psi_m}, and only the components in the direction of $x_{i,\opt(i)}$ remain.
Consequently,
\begin{align}
    \frac{\psi_i}{\|\psi_i\|_2} = \frac{x_{i,\opt(i)}}{\|x_{i,\opt(i)}\|_2} = u^k_{\pi(i)}. 
\end{align}
Hence, we can apply Lemma \ref{lem:alignment} to simplify the gradient flow dynamics of $K$ and $Q$ and focus on the dynamics of the spectral domain of $K$ and $Q$.

By employing the analogous argument as the proof of Theorem \ref{thm:diagonaltheorem}, it can be shown that $\hat \beta^\infty$
is a global solution to 
\begin{align} 
     \ \min _{\beta \in \mathbb{R}^{d}} \|\beta\|_{1} \quad \operatorname{ s.t. } \quad  \beta^{\top} \left(\sign{(\la x_{i, \opt(i)},z_i\ra)} \la x_{i, \opt(i)}-x_{il}, z_i \ra e_{\pi(i)} \right) \geq 1  \quad \forall \  l \neq \opt(i), \ i \in [n], \label{eq:BLSVM_A0}
\end{align}
where $\sign(\cdot)$ is a sign function and $e_i\in \R^d$ is a vector with $1$ in the $i$th entry and 0 elsewhere. Utilizing the fact that $\hat \beta^\infty$ satisfies the KKT condition of \eqref{eq:BLSVM_A0}, we verify that $\hat W^\infty = U_K\diag(\hat\beta^\infty)U_Q^\top$ satisfies the KKT conditions of \eqref{eq:WNSVM}. For details, we refer to Appendix \ref{sec:proof_Thm2}.
\end{proof}
\section{Conclusion and future work}
We have studied implicit regularization of the one-layer softmax attention model trained by gradient flow for binary classification tasks.
While previous work primarily focused on the training on a single weight matrix that combines key and query matrices, we have analyzed the training dynamics on the key and query matrices.
The distinction in parameterization has enabled us to characterize the different implicit regularizations, minimizing the nuclear norm of the product of key and query matrices.

For future work, it would be interesting to extend our results to more general settings, and attention models. 
Additionally, investigating how the implicit regularizations influence the model's generalization performance would be an intriguing direction.

\bibliographystyle{ims}
\bibliography{LLM.bib}

\newpage
\begin{appendices}
\section{Appendix}
\localtableofcontents

\subsection{Calculation of the gradients}

Given the data $\{(X_{i}, z_{i}, y_{i})\}_{i=1}^n$, where $X_{i} \in \mathbb{R}^{L \times d}, z_{i} \in \mathbb{R}^{d}, y_{i} \in\{ \pm 1\}$,
we minimize the following exponential loss function:
\begin{align}
\mathcal{L}(K, Q)=\frac{1}{n} \sum_{i=1}^{n} \ell\left(\gamma_{i}^{\top} \S\left(g_{i}\right)\right),
\end{align}
where $\ell(x)=\exp (-x)$, $\gamma_{i}= y_{i} X_i^\top v \in \R^L$, $g_{i}=X_{i} K Q^{\top} z_{i} \in \mathbb{R}^{L}$
with the parameter $\{K, Q\} \in  \mathbb{R}^{d \times d_e} \ (d_e \geq d)$, and a fixed $v \in \mathbb{R}^{d}$.

The gradients of $\mathcal{L}(K, Q)$ with respect to $K$ and $Q$ are given as follows:
\begin{align}
& \nabla_{K} \mathcal{L}(K, Q)=-\frac{1}{n} \sum_{i=1}^{n} \exp \left(-\gamma_{i}^{\top} \S\left(g_{i}\right)\right) X_{i}^{\top} \Sigma\left(g_{i}\right) \gamma_{i} z_{i}^{\top} Q, \\
& \nabla_{Q} \mathcal{L}(K, Q)=-\frac{1}{n} \sum_{i=1}^{n} \exp \left(-\gamma_{i}^{\top} \S\left(g_{i}\right)\right) z_{i} \gamma_{i}^{\top} \Sigma\left(g_{i}\right) X_{i} K,
\end{align}
where $ \Sigma\left(g_{i}\right)=\text { Diag }\left(\S\left(g_{i}\right)\right)-\S\left(g_{i}\right) \S\left(g_{i}\right)^{\top} \in \mathbb{R}^{L \times L}$. 

If we restrict $K, Q \in \mathbb{R}^{d\times d_e}$ to be diagonal as in Section \ref{sec:Diagonal}, 
it suffices to consider the parameters:
\begin{align}
& \xi_{k}=\operatorname{diag}(K) \in \mathbb{R}^{d}, \\
& \xi_{q}=\operatorname{diag}(Q) \in \mathbb{R}^{d}, \\
& \beta=\operatorname{diag}(W)=\xi_{k} \circ \xi_{q} \in \mathbb{R}^{d}.
\end{align}
The gradient vectors evaluated at $\xi_{k}$ and $\xi_{q}$ are
$$
\begin{aligned}
& \nabla_{\xi_k} \mathcal{L}\left( \xi_k, \xi_q\right)=-\frac{1}{n} \sum_{i=1}^{n} \exp \left(-\gamma_{i}^{\top} \S\left(g_{i}\right)\right) \operatorname{diag}\left(X_{i}^{\top} \Sigma\left(g_{i}\right) \gamma_{i} z_{i}^{\top}\right) \circ \xi_{q}, \\
& \nabla_{\xi_q} \mathcal{L}\left( \xi_k, \xi_q\right)=-\frac{1}{n} \sum_{i=1}^{n} \exp \left(-\gamma_{i}^{\top} \S\left(g_{i}\right)\right) \operatorname{diag}\left(X_{i}^{\top} \Sigma\left(g_{i}\right) \gamma_{i} z_{i}^{\top}\right) \circ \xi_{k},
\end{aligned}
$$
where with a slight abuse of notation, we let $g_i = X_i \left( \bn \circ z_{i} \right)$.

\subsection{Proof of Lemma \ref{lem:equivalenceparameterization}}
For the reader's convenience, we restate Lemma \ref{lem:equivalenceparameterization}.
\equivalenceparameterization*

\begin{proof}
Consider the gradient flow dynamics of $\beta$ and $w$.
Let $g_i(\beta (t)) = X_i \left( \beta(t) \circ z_{i} \right)$, 
and $\ell_i(\beta(t)) = \exp(-\gamma_i^{\top} \S(g_i(\beta(t))))$.
For the parametrization $\beta=\xi_{k} \circ \xi_{q}$, the gradient flow dynamics are
\begin{align}
 \dot{\xi}_{k}(t)& =-\nabla_{\xi_k} \mathcal{L}\left( \xi_k(t), \xi_q(t)\right) \\
& = -\frac{1}{n} \sum_{i=1}^{n} \ell^{\prime}_i(\beta(t)) \diag(X_i^{\top} \Sigma(g_i(\beta(t))) \gamma_i z_i^{\top}) \circ \xi_{q}(t), \\
 \dot{\xi}_{q}(t)& =-\nabla_{\xi_q} \mathcal{L}\left( \xi_k(t), \xi_q(t)\right) \\
& = -\frac{1}{n} \sum_{i=1}^{n} \ell^{\prime}_i(\bn(t)) \diag(X_i^{\top} \Sigma(g_i(\bn(t))) \gamma_i z_i^{\top}) \circ \xi_{k}(t), \\
\dot{\beta}(t)& =\dot{\left(\xi_{k}(t) \circ \xi_{q}(t)\right)} \\
& =\dot{\xi}_{k}(t) \circ \xi_{q}(t)+\xi_{k}(t) \circ \dot{\xi}_{q}(t) \\
& =-\frac{1}{n} \sum_{i=1}^{n} \ell_i^{\prime}(\bn(t)) \diag(X_i^{\top} \Sigma(g_i(\bn(t))) \gamma_i z_i^{\top})\circ\left(\xi_{k}^{\odot 2}(t) + \xi_{q}^{\odot 2}(t)\right). \label{eq:beta_nu_dynamic}
\end{align}
\noindent
For the alternative parametrization, $w=\frac{w_{+}^{\odot 2}-w_{-}^{\odot 2}}{2}$, the gradient flow dynamics are expressed as
\begin{align}
 \dot{w}_{+}(t)& =- \nabla_{w_{+}} \mathcal{L}(w_{+}(t), w_{-}(t)) \\
& = -\frac{1}{n} \sum_{i=1}^{n} \ell^{\prime}_i(\bw(t)) \diag(X_i^{\top} \Sigma(g_i(\bw(t))) \gamma_i z_i^{\top}) \circ w_{+}(t), \label{eq:wp_dynamic}\\
 \dot{w}_{-}(t)& =- \nabla_{w_{-}} \mathcal{L}(w_{+}(t), w_{-}(t)) \\
& = \frac{1}{n} \sum_{i=1}^{n} \ell^{\prime}_i(\bw(t)) \diag(X_i^{\top} \Sigma(g_i(\bw(t))) \gamma_i z_i^{\top}) \circ w_{-}(t), \label{eq:wm_dynamic} \\
\dot{w}(t)& =\frac{w_{+}^{\odot 2}(t)-w_{-}^{\odot 2}(t)}{2} =\dot{w}_{+}(t) \circ w_{+}(t) - \dot{w}_{-}(t) \circ w_{-}(t)  \\
& =-\frac{1}{n} \sum_{i=1}^{n} \ell^{\prime}_i(\bw(t)) \diag(X_i^{\top} \Sigma(g_i(\bw(t))) \gamma_i z_i^{\top}) \circ \left(w_{+}^{\odot 2}(t)+w_{-}^{\odot 2}(t)\right). \label{eq:beta_w_dynamic}
\end{align}
\noindent
Moreover, we have the dynamics of $\xi_{k}^{\odot 2}(t)+\xi_{q}^{\odot 2}(t)$ and $w_{+}^{\odot 2}(t)+w_{-}^{\odot 2}(t)$:
\begin{align}
\dot{\left(\xi_{k}^{\odot 2}(t)+\xi_{q}^{\odot 2}(t)\right)}& = 2 \xi_{k}(t) \circ \dot{\xi}_{k}(t)+2 \xi_{q}(t) \circ \dot{\xi}_{q}(t) \\
& =-\frac{4}{n} \sum_{i=1}^{n}  \ell_i^{\prime}(\bn(t)) \diag(X_i^{\top} \Sigma(g_i(\bn(t))) \gamma_i z_i^{\top}) \circ \beta(t),  \label{eq:nu_dynamic}\\
\dot{\left(w_{+}^{\odot 2}(t)+w_{-}^{\odot 2}(t) \right)}& =2 w_{+}(t) \circ \dot{w}_{+}(t)+2 w_{-}(t) \circ \dot{w}_{-}(t) \\
& =-\frac{4}{n} \sum_{i=1}^{n}  \ell_i^{\prime}(\bw(t)) \diag(X_i^{\top} \Sigma(g_i(\bw(t))) \gamma_i z_i^{\top}) \circ w(t). \label{eq:w_dynamic}
\end{align}
Note that, 
if $\xi_{k}^{\odot 2}(t)+\xi_{q}^{\odot 2}(t)=w_{+}^{\odot 2}(t)+w_{-}^{\odot 2}(t)$ and $\beta(t)=w(t)$,
then
$\dot{\beta}(t)=\dot{w}(t)$ follows by \eqref{eq:beta_nu_dynamic} and \eqref{eq:beta_w_dynamic}. 
On the other hand, if $\beta(t)=w(t),$ 
then $ \dot{(\xi_{k}^{\odot 2}(t)+\xi_{q}^{\odot 2}(t))}=\dot{(w_{+}^{\odot 2}(t)+w_{-}^{\odot 2}(t))} $ by  \eqref{eq:nu_dynamic} and \eqref{eq:w_dynamic}.

By taking the integral on both sides of   \eqref{eq:nu_dynamic} and \eqref{eq:w_dynamic} from $0$ to $t$ and combining them  with 
\eqref{eq:beta_nu_dynamic} and \eqref{eq:beta_w_dynamic} respectively, we have

\begin{align}
 \dot{\beta}(t)&= -\frac{1}{n} \sum_{i=1}^{n}  \ell_i^{\prime}(\bn(t)) \diag(X_i^{\top} \Sigma(g_i(\bn(t))) \gamma_i z_i^{\top}) \\
 & \circ\cbr{\int_{0}^{t}-\frac{4}{n} \sum_{i=1}^{n}  \ell_i^{\prime}(\bn(\tau)) \diag(X_i^{\top} \Sigma(g_i(\bn(\tau))) \gamma_i z_i^{\top}) \circ \beta(\tau) \diff \tau + \xi_k^{\odot 2}(0)+ \xi_q^{\odot 2}(0)}. \label{eq:beta_nu_dynamic2}\\
\dot{w}(t) &=-\frac{1}{n} \sum_{i=1}^{n}  \ell_i^{\prime}(\bw(t)) \diag(X_i^{\top} \Sigma(g_i(\bw(t))) \gamma_i z_i^{\top}) \\
& \circ \cbr{\int_{0}^{t}-\frac{4}{n} \sum_{i=1}^{n}  \ell_i^{\prime}(\bw(\tau)) \diag(X_i^{\top} \Sigma(g_i(\bw(\tau))) \gamma_i z_i^{\top}) \circ w(\tau) \diff \tau + w_+^{\odot 2}(0)+ w_-^{\odot 2}(0)}. 
\label{eq:beta_w_dynamic2}
\end{align}

By \eqref{eq:beta_nu_dynamic2} and \eqref{eq:beta_w_dynamic2}, the dynamics of $\beta$ and $w$ are equivalent when initialized as follows:
\begin{align}
\beta(0)=w(0), \quad \xi_{k}^{\odot 2}(0)+\xi_{q}^{\odot 2}(0)=w_{+}^{\odot 2}(0)+w_{-}^{\odot 2}(0). \label{eq:init_condition_A}
\end{align}
\end{proof}
\subsection{Gradient flow dynamics of $\beta$} \label{sec:beta_dynamics_A}
In this section, we present  
the dynamics of $w$ with initialization satisfying \eqref{eq:init_condition_A} which is equivalent to 
the dynamics of $\beta$ by Lemma \ref{lem:equivalenceparameterization}. 
For $\bw(0) =0$, it is equivalent to $w_+(0) = w_-(0)$. 
We use the notation $\beta = w =\frac{w_{+}^{\odot 2}-w_{-}^{\odot 2}}{2}$ for simplicity.

\begin{lemma} \label{lemma:beta_dynamics}
    For all $t \geq 0$, we have  
    \begin{align} \label{eq:beta_A}
        \beta(t)=\omega_0^{\odot 2} \circ \sinh (B \phi(t)),
    \end{align}
    where  $w_0 := w_{+}(0) = w_{-}(0) \in \R^d$ is an initialization such that $w_{0,j} \neq 0$ for all $j \in [d]$, 
    $B \in \R^{d \times n(L-1)}$ is a matrix with columns $B_{il} := (x_{i,\opt(i)}-x_{il})\circ z_i$,
    and $\phi \in \mathbb{R}^{n(L-1)}$ is a vector such that its entries are expressed as
    \begin{align}
        \phi_{i l}(t) = - \frac{2}{n} \int_{0}^{t} \exp \left(-\gamma_{i}^{\top} \S\left(g_{i}(\tau)\right)\right)\left(\Sigma\left(g_{i}(\tau)\right) \gamma_{i}\right)_{l} \diff \tau
    \end{align}
    for all $i \in [n]$ and $l \neq \opt(i)$.
    Also, we have  
    \begin{align}
        \dot{\beta}(t) = \frac{2}{n} \sqrt{\beta^{\odot 2}(t)+ w_0^{\odot 4} } \circ \rbr{\sum_{i=1}^{n} \sum_{l \neq \opt(i)}-\exp \left(-\gamma_{i}^{\top} \S\left(g_{i}(t)\right)\right)\left(\Sigma\left(g_{i}(t)\right) \gamma_{i}\right)_{l} B_{i l}}.
    \end{align}
\end{lemma}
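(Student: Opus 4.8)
The plan is to start from the equivalent dynamics of the diagonal linear network parametrization $(w_+, w_-)$ established in Lemma~\ref{lem:equivalenceparameterization}, and to integrate the per-coordinate ODEs explicitly. First I would write the gradient flow equations \eqref{eq:wp_dynamic}--\eqref{eq:wm_dynamic} in the form $\dot w_+(t) = -r(t)\odot w_+(t)$ and $\dot w_-(t) = r(t)\odot w_-(t)$, where $r(t) := \frac1n\sum_{i=1}^n \ell_i'(\beta(t))\,\diag(X_i^\top\Sigma(g_i(t))\gamma_i z_i^\top)$ is a common (signed) vector that does not depend on which of $w_\pm$ it multiplies. Because each coordinate evolves as a scalar linear ODE with a time-dependent but shared rate, the solutions are $w_+(t) = \omega_0 \odot \exp(-\int_0^t r(\tau)\diff\tau)$ and $w_-(t) = \omega_0 \odot \exp(+\int_0^t r(\tau)\diff\tau)$, using the initialization $w_+(0) = w_-(0) = \omega_0$.

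Next I would substitute these into $\beta = w = \tfrac12(w_+^{\odot 2} - w_-^{\odot 2})$ to get $\beta(t) = \tfrac12\,\omega_0^{\odot 2}\odot\big(e^{-2\int_0^t r} - e^{2\int_0^t r}\big) = -\,\omega_0^{\odot 2}\odot\sinh\!\big(2\int_0^t r(\tau)\diff\tau\big)$. It then remains to identify $-2\int_0^t r(\tau)\diff\tau$ with $B\phi(t)$. For this I would expand $\diag(X_i^\top\Sigma(g_i)\gamma_i z_i^\top) = (X_i^\top\Sigma(g_i)\gamma_i)\odot z_i$ and, crucially, use the fact that $\Sigma(g_i)\gamma_i$ has entries summing to zero (since $\Sigma(g_i)\mathbf 1 = 0$, so $\mathbf 1^\top \Sigma(g_i)\gamma_i = 0$), which lets me rewrite $X_i^\top(\Sigma(g_i)\gamma_i) = \sum_{l\neq\opt(i)} (\Sigma(g_i)\gamma_i)_l\,(x_{il} - x_{i,\opt(i)})$. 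Hadamard-multiplying by $z_i$ turns each term into $-(\Sigma(g_i)\gamma_i)_l B_{il}$, so $-2\int_0^t r = \sum_{i,l\neq\opt(i)} \big(-\tfrac2n\int_0^t \ell_i'(\beta(\tau))(\Sigma(g_i(\tau))\gamma_i)_l\diff\tau\big) B_{il} = B\phi(t)$, matching the stated $\phi_{il}$ since $\ell_i'(\beta) = -\exp(-\gamma_i^\top\S(g_i))$ contributes the sign. This yields \eqref{eq:beta_A}.

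For the second identity I would differentiate \eqref{eq:beta_A}: $\dot\beta(t) = \omega_0^{\odot 2}\odot\cosh(B\phi(t))\odot(B\dot\phi(t))$. The elementwise identity $\cosh^2 - \sinh^2 = 1$ combined with \eqref{eq:beta_A} gives $\omega_0^{\odot 2}\odot\cosh(B\phi(t)) = \sqrt{\beta^{\odot 2}(t) + \omega_0^{\odot 4}}$ (taking the positive root, which is correct since $\omega_0^{\odot 2} > 0$ and $\cosh \geq 1$). Finally $B\dot\phi(t) = \sum_{i,l\neq\opt(i)}\dot\phi_{il}(t)B_{il} = -\tfrac2n\sum_{i,l\neq\opt(i)}\exp(-\gamma_i^\top\S(g_i(t)))(\Sigma(g_i(t))\gamma_i)_l B_{il}$, and I would need to reconcile a factor of $2$ versus the $\tfrac2n$ stated; I believe the clean derivation gives the coefficient exactly as written once the $\tfrac1n$ in $\dot\phi_{il}$ and the chain-rule factor are bookkept carefully. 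The only genuine subtlety — and the step I would be most careful about — is the reduction $X_i^\top(\Sigma(g_i)\gamma_i)\odot z_i = \sum_{l\neq\opt(i)}(\Sigma(g_i)\gamma_i)_l B_{il}$, i.e. verifying that the sum-to-zero property of $\Sigma(g_i)\gamma_i$ lets us recenter every token around the optimal token so that only the difference vectors $B_{il}$ appear; everything else is routine integration and hyperbolic-function algebra.
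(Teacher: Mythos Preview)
Your proposal is correct and follows essentially the same route as the paper's proof: integrate the linear ODEs for $w_\pm$ to obtain the exponential closed forms, combine them via $\beta=\tfrac12(w_+^{\odot2}-w_-^{\odot2})$ to get the $\sinh$ expression, and use $\mathbf 1^\top\Sigma(g_i)=0$ to recenter around the optimal token so that only the $B_{il}$ appear. For the derivative identity the paper writes $\cosh(\operatorname{arcsinh}(\beta/\alpha^2))=\sqrt{(\beta/\alpha^2)^2+1}$ whereas you invoke $\cosh^2-\sinh^2=1$ directly; these are the same computation, and your worry about a stray factor is unfounded---the $\tfrac2n$ sits entirely inside $\dot\phi_{il}$, so $\omega_0^{\odot2}\odot\cosh(B\phi)\odot B\dot\phi$ already carries the correct coefficient.
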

\begin{proof}
For simplicity, we consider the initialization $w_0 = \alpha\mathbf{1},$
where $\alpha \in \R$ and $\mathbf{1} \in \R^d$ denotes all one vector. 
For any initializations that satisfy \eqref{eq:init_condition_A}, the dynamics can be derived similarly. 
Let $g_i(t) = X_i \left( \beta(t) \circ z_{i} \right)$.
By taking the integral on both sides of  \eqref{eq:wp_dynamic} and \eqref{eq:wm_dynamic} from $0$ to $t$, we have
\begin{align}
& w_{+}(t)=w_{+}(0) \circ \exp \left(\frac{1}{n} \int_{0}^{t} \sum_{i=1}^{n} \exp \left(-\gamma_{i}^{\top} \S\left(g_{i}(\tau)\right) \right) \diag(X_i^{\top} \Sigma(g_i(\tau)) \gamma_i z_i^{\top}) \diff \tau \right), \\
& w_{-}(t)=w_{-}(0) \circ \exp \left(-\frac{1}{n} \int_{0}^{t} \sum_{i=1}^{n} \exp \left(-\gamma_{i}^{\top} \S\left(g_{i}(\tau)\right) \right) \diag(X_i^{\top} \Sigma(g_i(\tau)) \gamma_i z_i^{\top}) \diff \tau\right).
\end{align}
With  $w_+(0) = \alpha \mathbf{1}$ and $w_-(0) = \alpha \mathbf{1}$, we derive the dynamics of $\beta(t)$ as follows:
\begin{align}
\beta(t) & =\frac{w_{+}^{\odot 2}(t)-w_{-}^{\odot 2}(t)}{2} \\
& =\alpha^{2} \sinh \left(\frac{2}{n} \int_{0}^{t} \sum_{i=1}^{n} \exp \left(-\gamma_{i}^{\top} \S\left(g_{i}(\tau)\right)\right) \diag(X_i^{\top} \Sigma(g_i(\tau)) \gamma_i z_i^{\top}) \diff \tau\right). \label{eq:beta_w_v0}
\end{align}
Note that 
\begin{align}
\diag(X_i^{\top} \Sigma(g_i) \gamma_i z_i^{\top})  & =z_{i} \circ\left(X_{i}^{\top} \Sigma\left(g_{i}\right) \gamma_{i}\right)  =\sum_{l=1}^{L}\left(z_{i} \circ x_{i l}\right)\left(\Sigma\left(g_{i}\right) \gamma_{i}\right)_{l}, \label{eq:Di}
\end{align}
and
\begin{align}
\left(\Sigma\left(g_{i}\right) \gamma_{i}\right){ }_{l}=\S_{l}\left(g_{i}\right)\left(\gamma_{i l}-\S\left(g_{i}\right)^{\top} \gamma_{i}\right). \label{eq:softmax_der_A}    
\end{align}

Since $\mathbf{1}^{\top} \Sigma\left(g_{i}\right) = 0$, we have $\sum_{l=1}^{L}\left(\Sigma\left(g_{i}\right) \gamma_{i}\right)_{l}=\mathbf{1}^{\top} \Sigma(g_i) \gamma_i=0$. Hence, we have
\begin{align}
\left(\Sigma\left(g_{i}\right) \gamma_{i}\right)_{\opt(i)}=-\sum_{l \neq \opt(i)}\left(\Sigma\left(g_{i}\right) \gamma_{i}\right)_{l}. \label{eq:softmax_der}
\end{align}
By \eqref{eq:Di} and \eqref{eq:softmax_der}, 
we can substitute $x_{i, \opt(i)}$ to each $x_{il}$ in \eqref{eq:Di} and have for \eqref{eq:beta_w_v0} that
\begin{align}
\beta(t) & =\alpha^{2} \sinh \left(- \frac{2}{n}\int_{0}^{t}  \sum_{i=1}^{n} \sum_{l \neq \opt(i)} \exp \left(-\gamma_{i}^{\top}\S\left(g_{i}(\tau)\right)\right)\left(\Sigma\left(g_{i}(\tau)\right) \gamma_{i}\right)_{l}\left(z_{i} \circ x_{i,\opt(i)}-z_{i} \circ x_{i l}\right) \diff \tau\right) \\
& =\alpha^{2} \sinh \left(- \frac{2}{n}\int_{0}^{t} \sum_{i=1}^{n} \sum_{l \neq \opt(i)}^{\infty} \exp \left(-\gamma_{i}^{\top} \S\left(g_{i}(\tau)\right)\right)\left(\Sigma\left(g_{i}(\tau)\right) \gamma_{i}\right)_{l} B_{il} \diff \tau\right). \label{eq:beta} \\
&=  \alpha^{2} \sinh \left(B \phi(t)\right), 
\end{align}
where we recall $B_{i l}:=z_i \circ (x_{i,\opt(i)}-x_{il})$.
The derivative of $\beta(t)$ can be written as  
\begin{align} 
\dot{\beta}(t)=\frac{2\alpha^{2}}{n} \cosh \left(- \frac{2}{n}\int_{0}^{t}  \sum_{i=1}^{n} \sum_{l \neq \opt(i)} \exp \left(-\gamma_{i}^{\top} \S\left(g_{i}(\tau)\right)\right)\left(\Sigma\left(g_{i}(\tau)\right) \gamma_{i}\right)_{l} B_{i l} \diff \tau\right) \\
\circ \rbr{-\sum_{i=1}^{n} \sum_{l \neq \opt(i)}\exp \left(-\gamma_{i}^{\top} \S\left(g_{i}(t)\right)\right)\left(\Sigma\left(g_{i}(t)\right) \gamma_{i}\right)_{l} B_{i l}}, \label{eq:beta_dot}
\end{align}
and by \eqref{eq:beta}, we have
\begin{align} 
\operatorname{arcsinh}\left(\frac{\beta(t)}{\alpha^{2}}\right) = - \frac{2}{n} \int_{0}^{t}  \sum_{i=1}^{n} \sum_{l \neq \opt(i)} \exp \left(-\gamma_{i}^{\top} \S\left(g_{i}(\tau)\right)\right)\left(\Sigma\left(g_{i}(\tau)\right) \gamma_{i}\right)_{l} B_{i l} \diff \tau. \label{eq:arcsinh}
\end{align}
By substituting \eqref{eq:arcsinh} into \eqref{eq:beta_dot}, we obtain
\begin{align}
& \dot{\beta}(t)=\frac{2\alpha^{2}}{n} \cosh \left(\operatorname{arcsinh}\left(\frac{\beta(t)}{\alpha^{2}}\right)\right) \circ \rbr{-\sum_{i=1}^{n} \sum_{l\neq \opt(i)}\exp \left(-\gamma_{i}^{\top} \S\left(g_{i}(t)\right)\right)\left(\Sigma\left(g_{i}(t)\right) \gamma_{i}\right)_{l} B_{i l}} \\
& \quad \quad =\frac{2}{n} \sqrt{\beta^{\odot 2}(t)+\alpha^{4}\mathbf{1} } \circ \rbr{-\sum_{i=1}^{n} \sum_{l \neq \opt(i)}\exp \left(-\gamma_{i}^{\top} \S\left(g_{i}(t)\right)\right)\left(\Sigma\left(g_{i}(t)\right) \gamma_{i}\right)_{l} B_{i l}},
\end{align}
where the last inequality follows from that $\cosh(\operatorname{arcsinh}(x))=\sqrt{x^{2}+1}$.

\end{proof}

\subsection{Proof of Lemma \ref{lem:divergenceGF}} \label{sec:proof_lem2}

\divergenceGF*

\begin{proof}

By Lemma \ref{lem:equivalenceparameterization}, it suffices to consider the dynamics of $\bw(t)$, 
denoted by $\beta(t)$, for simplicity.
The gradient of the loss function $\mathcal{L}(\beta)$ with respect to $\beta$ is given by
\begin{align}
\nabla_{\beta} \mathcal{L}(\beta) =\frac{1}{n} \sum_{i=1}^{n} \sum_{l \neq  \opt(i)}\exp \left(-\gamma_{i}^{\top} \S\left(g_{i}\right)\right)\left(\Sigma\left(g_{i}\right) \gamma_{i}\right)_{l} B_{i l}.
\end{align}
Then, by the chain rule and the description of $\dot{\beta}(t)$ from Lemma \eqref{lemma:beta_dynamics}, we have 
\begin{align}
    \frac{\diff \mathcal{L}}{\diff t}=\left\langle \nabla_\beta \mathcal{L}(\beta(t)), \frac{\diff \beta(t)}{\diff t}\right\rangle  &=-\left\langle\nabla_{\beta} \mathcal{L}(\beta(t)), \frac{2}{n} \sqrt{\beta^{\odot 2}(t)+ w_0^{\odot 4}} \circ \nabla_{\beta} \mathcal{L}(\beta(t))\right\rangle  \\
    &\leq-\frac{2w_{\min}^2}{n}\left\|\nabla_{\beta} \mathcal{L}(\beta(t))\right\|_{2}^{2}, \label{eq:loss_decrease_A}
\end{align}
where $w_{\min} = \min_{i \in [d]}|w_{0,i}|$.
Taking the integral on both sides, we have
\begin{align}
\mathcal{L}(\beta(0))-\lim_{t \rightarrow \infty}\mathcal{L}(\beta(t)) \geq \int_{0}^{\infty} \frac{2w_{\min}^2}{n} \| \nabla_{\beta} \mathcal{L}(\beta(t)) \|_{2}^{2} \diff t. \label{eq:loss_decrease2_A}
\end{align}
Since $\lim_{t \rightarrow \infty}\mathcal{L}(\beta(t))\geq \frac{1}{n}\sum_{i=1}^n \exp(-\gamma_{i,\opt(i)})>0$, the left-hand side of \eqref{eq:loss_decrease2_A} is bounded above by $\mathcal{L}(\beta(0))$ and the integral on the right-hand side goes to a bounded positive constant by the monotone convergence theorem.
It implies that either $\left\|\nabla_{\beta} \mathcal{L}(\beta(t))\right\|_{2}^{2}=0$ for some finite $\beta(t)$ or $\lim_{t \rightarrow \infty}\left\|\nabla_{\beta} \mathcal{L}(\beta(t))\right\|_{2}^{2} = 0$.

First, we show that $\| \nabla_{\beta} \mathcal{L}(\beta(t))\|_2 \neq 0$ for any finite $\beta(t)$.
Under Assumption \ref{item:(A1)}, for any finite time $t$ and all $ l \neq \opt(i), i \in [n]$, it holds that
\begin{align}
\left(\Sigma\left(g_{i}(t)\right) \gamma_{i}\right)_{l} & =  \rbr{\S(g_i(t))\circ \gamma_i - \S(g_i(t))\S(g_i(t))^\top\gamma_i}_l \\
& = -\S(g_i(t))_l(\S(g_i(t))^\top\gamma_i - \gamma_{i,\nopt}) \\
& = -\S(g_i(t))_l\S(g_i(t))_{\opt(i)}(\gamma_{i,\opt(i)}- \gamma_{i,\nopt}) < 0 \label{eq:softmax_der2_A}
\end{align}
Let $\beta_{f}$ be a feasible solution of  \eqref{eq:BLSVM} (i.e, $\beta_{f}^{\top} B_{i l} \geq 1, \quad \forall \ l \neq \opt(i), i\in [n]$). 
By \eqref{eq:softmax_der2_A},  it holds for any finite $\beta(t)$  that  
\begin{align}
    \left\langle\nabla_{\beta} \mathcal{L}(\beta(t)), \beta_{f}\right\rangle &= \left\langle \frac{1}{n} \sum_{i=1}^{n} \sum_{l \neq  \opt(i)}\exp \left(-\gamma_{i}^{\top} \S\left(g_{i}\right)\right)\left(\Sigma\left(g_{i}\right) \gamma_{i}\right)_{l} B_{i l}, \beta_{f}\right\rangle \\
    & \leq \frac{1}{n} \sum_{i=1}^{n} \sum_{l \neq  \opt(i)}\exp \left(-\gamma_{i}^{\top} \S\left(g_{i}\right)\right)\left(\Sigma\left(g_{i}\right) \gamma_{i}\right)_{l} < 0.
\end{align}
It implies that  $\|\nabla_{\beta} \mathcal{L}(\beta(t))\|_2 \neq 0$ for any finite $\beta(t)$.
Hence, it holds that $\lim_{t \rightarrow \infty}\left\|\nabla_{\beta} \mathcal{L}(\beta(t))\right\|_{2}^{2} = 0 $. In addition, we have 
    \begin{align}
        - \norm{\beta_f}_2 \left\|\nabla_{\beta} \mathcal{L}(\beta(t))\right\|_{2} 
        &\le \left\langle\nabla_{\beta} \mathcal{L}(\beta(t)), \beta_{f}\right\rangle \\
        &\le - \frac{1}{n} \sum_{i=1}^n \exp \left(-\gamma_{i}^{\top} \S\left(g_{i}\right)\right) (1- \sigma(g_i)_{\opt(i)}) (\sigma(g_i)_{\opt(i)}) (\gamma_{i, \opt(i)} - \gamma_{i,\nopt}) < 0. 
    \end{align}
    It follows from $\lim_{t \rightarrow \infty} \left\langle\nabla_{\beta} \mathcal{L}(\beta(t)), \beta_{f}\right\rangle =0 $ that $\lim_{t \rightarrow \infty}(1- \sigma(g_i)_{\opt(i)}) \sigma(g_i)_{\opt(i)} =0$ , $\forall i\in[n],$ which cannot be achieved with any finite $\|\beta(t)\|_2$. Therefore, $\lim_{t \rightarrow \infty}\|\beta(t)\|_2 = \infty$.
\end{proof}

\subsection{Proof of Theorem \ref{thm:diagonaltheorem}} \label{sec:proof_Thm1}

We recall Theorem \ref{thm:diagonaltheorem}:
\diagonaltheorem*

Note that \eqref{eq:BLSVM} is a convex problem and the KKT conditions are sufficient for global optimality.
We show that $\hat \bw$ satisfies the KKT conditions of \eqref{eq:BLSVM}. In particular, 
we analyze the dynamics of $\bw(t)$ and exploit the concept of the approximate KKT point,
which is described in the following subsection.

\subsubsection*{Approximate KKT conditions} \label{sec:ApproxKKT}

We introduce the local Lipshitz functions and the Clarke subdifferential.
A function $f:\R^d \to \R$ is locally Lipshitz if for any $x \in \R^d$, 
there exists a neighborhood $U(x)$ of $x$ such that $f\mid_{U(x)}$ is Lipshitz continuous. 
 A locally Lipshitz function is differentiable almost everywhere. The non-differentiable points of locally Lipshitz functions can be characterized by the Clarke subdifferential:
\begin{align}
\partial f(x) \triangleq \operatorname{conv}\{ \lim_{n\to \infty}\nabla f(x_n) \mid \lim_{n \to \infty}x_n = x, \nabla f(x_n)  \text{ exists} \},
\end{align}
where $\operatorname{conv}$ denotes the convex hull.

Consider the optimization problem:
\begin{align} \label{OP}
\min_{\beta \in \R^d} & \ f(\beta)  \\
s.t.  & \ g_i(\beta) \leq 0    \qquad  \forall i \in [m], 
\end{align}
where $f, g_1, \ldots, g_m: \R^d \to \R$ are locally Lipshitz functions.

\begin{definition} [{\upshape KKT point}]
A point $\beta$ is a KKT point of \eqref{OP} if the following conditions are satisfied. 
There exists $\lambda_1, \ldots, \lambda_n \geq 0$ such that 
\begin{align}
& 0 \in \partial f(\beta) + \sum_{i=1}^n \lambda_i \partial g_i(\beta),  \\ 
& g_i(\beta) \leq 0 \quad \forall \ i \in [n], \\
& \lambda_i g_i(\beta) = 0 \quad \forall \ i \in [n].
\end{align}

\end{definition}
Any local minimizer of problem \eqref{OP} is a KKT point when the regularity condition is satisfied.
We consider one of the regularity conditions, Mangasarian-Fromovitz Constraint Qualification (MFCQ), which is defined as follows.

\begin{definition}  [{\upshape Mangasarian–Fromovitz constraint qualification (MFCQ) }]
Consider a feasible point $\beta$ of \eqref{OP}. The MFCQ is satisfied at $\beta$ if there exists $v \in \R^d$ such that for all $i \in [n]$ such that $g_i(\beta) = 0$, 
\begin{align}
  \quad \la s ,  v \ra > 0 \quad \forall  s \in \partial g_i(\beta). 
\end{align}

\end{definition}

Now, we introduce the concept of approximate KKT points and a related theorem in \citep{Dutta2013ApproximateKP}.
\begin{definition} [{\upshape Approximate KKT point}]
For $\eps, \eps', \delta >0$, a point $\beta$ is an $(\eps,\eps', \delta)$-KKT point of \eqref{OP} if there exists $\check{\beta}$ such that $\|\beta - \check{\beta}\|_2 \leq \eps'$ and there exists $\lambda_i \geq 0, I \in \partial f(\check\beta) $, $s_i \in \partial g_i(\check\beta)$ for all $i \in [n]$ such that 
\begin{align}
& \quad \left \| I +  \sum_{i=1}^n \lambda_i s_i(\check\beta) \right \| _2 \leq \eps \\
& \quad  g_i(\beta) \leq 0 \quad \forall  i \in [n], \\
& \quad   \lambda_i g_i(\beta) \geq - \delta \ \quad \forall i \in [n].
\end{align}

\end{definition}

The following theorem is a corollary of Theorem 3.6  in \citep{Dutta2013ApproximateKP}.
It shows that $(\eps,\eps', \delta)$-KKT points can converge to a KKT point. 

\begin{theorem} {\upshape  \citep{Dutta2013ApproximateKP}  } \label{thm:approxKKT_A}
Let  $\{ \eps_k \}$,  $\{ \eps'_k \}$ and $\{ \delta_k \}$ be sequences 
such that $\eps_k \rightarrow 0$, $\eps'_k \rightarrow 0$, and $\delta_k \rightarrow 0.$ Suppose 
that $\beta_k$ is an $(\eps_k, \eps'_k , \delta_k)$-KKT  point of \eqref{OP} for every $k$.
Let MFCQ  hold at $ \beta$.
If $\beta_k \rightarrow \beta$ as $k \rightarrow +\infty$, then $\beta$ is a KKT point of \eqref{OP}. 
\end{theorem}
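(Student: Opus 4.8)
The plan is a compactness-and-limiting argument in which the only genuinely nontrivial ingredient is a uniform bound on the Lagrange multipliers, obtained from MFCQ. First I would unpack the hypothesis. For each $k$, the definition of an $(\eps_k,\eps'_k,\delta_k)$-KKT point of \eqref{OP} furnishes a witness point $\check\beta_k$ with $\|\beta_k-\check\beta_k\|_2\leq\eps'_k$, multipliers $\lambda^{(k)}_1,\ldots,\lambda^{(k)}_m\geq 0$, and vectors $I^{(k)}\in\partial f(\check\beta_k)$ and $s^{(k)}_i\in\partial g_i(\check\beta_k)$ such that $\bigl\|I^{(k)}+\sum_{i=1}^m\lambda^{(k)}_i s^{(k)}_i\bigr\|_2\leq\eps_k$, $g_i(\beta_k)\leq 0$, and $\lambda^{(k)}_i g_i(\beta_k)\geq-\delta_k$ for all $i$. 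Since $\eps'_k\to 0$ and $\beta_k\to\beta$, also $\check\beta_k\to\beta$. I would then invoke the classical facts that, for a locally Lipschitz function, its Clarke subdifferential is locally bounded and has closed graph (outer semicontinuous); this makes $\{I^{(k)}\}$ and each $\{s^{(k)}_i\}$ bounded, and guarantees that any subsequential limits of these sequences lie in $\partial f(\beta)$ and $\partial g_i(\beta)$ respectively.

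The crux --- and the step I expect to be the main obstacle --- is showing that $\{\lambda^{(k)}_i\}$ is bounded. I would argue by contradiction. If $\Lambda_k:=\sum_{i=1}^m\lambda^{(k)}_i\to\infty$ along some subsequence, normalize $\tilde\lambda^{(k)}_i:=\lambda^{(k)}_i/\Lambda_k\geq 0$, so that $\sum_i\tilde\lambda^{(k)}_i=1$; passing to a further subsequence, $\tilde\lambda^{(k)}_i\to\tilde\lambda_i\geq 0$ with $\sum_i\tilde\lambda_i=1$, and $s^{(k)}_i\to s^\star_i\in\partial g_i(\beta)$. For a constraint that is inactive at $\beta$ (that is, $g_i(\beta)<0$), continuity of $g_i$ gives $g_i(\beta_k)\leq-c<0$ for large $k$, and then the relaxed slackness $-c\,\lambda^{(k)}_i\geq\lambda^{(k)}_i g_i(\beta_k)\geq-\delta_k$ forces $\lambda^{(k)}_i\leq\delta_k/c\to 0$, hence $\tilde\lambda_i=0$. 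Dividing the stationarity estimate by $\Lambda_k$ and using that $\{I^{(k)}\}$ is bounded while $\Lambda_k\to\infty$, the term $I^{(k)}/\Lambda_k$ vanishes and we obtain $\sum_i\tilde\lambda^{(k)}_i s^{(k)}_i\to 0$, so $\sum_{i:\,g_i(\beta)=0}\tilde\lambda_i s^\star_i=0$ with $\sum_{i:\,g_i(\beta)=0}\tilde\lambda_i=1$. In particular at least one active multiplier $\tilde\lambda_i$ is strictly positive, so the active set is nonempty, and MFCQ at $\beta$ supplies a direction $v$ with $\langle s,v\rangle>0$ for every $s\in\partial g_i(\beta)$ and every active $i$. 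Pairing with $v$ gives $0=\sum_{i:\,g_i(\beta)=0}\tilde\lambda_i\langle s^\star_i,v\rangle>0$, a contradiction. Hence the multipliers are uniformly bounded.

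With boundedness in hand, I would extract a subsequence along which $\lambda^{(k)}_i\to\lambda^\star_i\geq 0$, $I^{(k)}\to I^\star\in\partial f(\beta)$, and $s^{(k)}_i\to s^\star_i\in\partial g_i(\beta)$. Passing to the limit in $\bigl\|I^{(k)}+\sum_i\lambda^{(k)}_i s^{(k)}_i\bigr\|_2\leq\eps_k\to 0$ yields $I^\star+\sum_i\lambda^\star_i s^\star_i=0$, i.e.\ $0\in\partial f(\beta)+\sum_{i=1}^m\lambda^\star_i\,\partial g_i(\beta)$. Feasibility $g_i(\beta)\leq 0$ follows from $g_i(\beta_k)\leq 0$ by continuity. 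For complementary slackness, if $g_i(\beta)<0$ then the inactive-constraint argument above already showed $\lambda^{(k)}_i\to 0$, so $\lambda^\star_i=0$; if $g_i(\beta)=0$ it is automatic; either way $\lambda^\star_i g_i(\beta)=0$. Thus $\beta$ together with $(\lambda^\star_i)_{i=1}^m$ satisfies the KKT conditions of \eqref{OP}. To summarize, the whole argument is routine once one cites local boundedness and outer semicontinuity of the Clarke subdifferential; the single place where real work and the MFCQ hypothesis enter is the multiplier-boundedness step, which is precisely the content of Theorem~3.6 of \citet{Dutta2013ApproximateKP} specialized to this setting.
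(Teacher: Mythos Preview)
Your argument is correct and self-contained. It is worth noting, however, that the paper does not actually supply its own proof of this statement: it merely records the result as a corollary of Theorem~3.6 in \citet{Dutta2013ApproximateKP} and invokes it as a black box. So there is nothing in the paper to compare against step-by-step. What you have written is precisely the standard limiting-plus-MFCQ argument that underlies that cited result: outer semicontinuity and local boundedness of the Clarke subdifferential handle the subgradient sequences, the approximate complementary slackness forces the multipliers on inactive constraints to vanish, and the normalization-by-$\Lambda_k$ trick together with the MFCQ direction yields the multiplier bound by contradiction. Your proposal therefore goes beyond the paper by filling in a proof the paper outsources; nothing is missing.
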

\subsection*{Applying Theorem \ref{thm:approxKKT_A} to our problem (\ref{eq:BLSVM})}
Now, we prove 
 Theorem \ref{thm:diagonaltheorem}. 
In this proof,
the notation is simplified by denoting $\bw$ as $\beta$.
In essence,  for large enough $t$, we define $\hat\beta(t) \triangleq \frac{\beta(t)}{\mu(t)}$  and demonstrate that $\hat\beta(1), \dots, \hat\beta(k), \dots$ is a sequence of approximate KKT points of \eqref{eq:BLSVM} with some $\{\epsilon_k, \epsilon'_k, \delta_k\}$ converging to zero. Therefore, if the limit $\hat\beta^\infty$ exists, we conclude that $\hat\beta^\infty$ is a KKT point and hence a global solution to \eqref{eq:BLSVM}, satisfying the MFCQ conditions.

\begin{proof}

We show that $\hat \beta(t)$ is an $(\eps(t), \eps'(t), \delta(t))$-KKT point of \eqref{eq:BLSVM} satisfying the following conditions: 
for all $t>t'$, $\exists  \ \check \beta(t)$ such that $\| \hat \beta(t) - \check \beta(t) \|_2 \leq \eps'(t)$, $\ \lambda(t) \in \R^n_{\geq 0}$ and $ I(t) \in \partial \| \check \beta(t) \|_1$
satisfying 

\begin{enumerate}[label=(C\arabic*).,ref=(C\arabic*)]
    \item $ \quad \left \| I(t) -  \sum_{i=1}^n \lambda_{il} B_{il} \right \| _2 \leq \eps(t),$ \label{approxC1_A} 
    \item $ \quad 1 - \hat \beta(t)^\top B_{il} \leq 0 \quad \forall i \in [n],\ l \neq  \opt(i),$  \label{approxC2_A} 
    \item $ \ \  \lambda_{il} (1- \hat \beta(t)^\top B_{il} ) \geq - \delta(t) \quad \forall i \in [n],\ l \neq  \opt(i).$  \label{approxC3_A}
\end{enumerate}
We simplify $\hat \beta(t)$ as follows before we prove the conditions.  
By Lemma \ref{lemma:beta_dynamics}, $\hat \beta(t)$ can be written as
\begin{align}
    \hat\beta(t) & = \frac{\beta(t)}{\mu(t)}= w_0^{\odot 2}\circ \frac{ \sinh (B \phi(t))}{\mu(t)} \\
    & = w_0^{\odot 2}\circ \frac{\left(\exp \left(\ln \mu(t) \frac{B\phi(t)}{\ln \mu(t)} \right) -\exp \left(-\ln \mu(t)  \frac{B\phi(t)}{\ln \mu(t)}\right)\right) }{2\mu(t)} \\
    & = w_0^{\odot 2}\circ \frac{\left(\mu(t) ^{\frac{B\phi(t)}{\ln \mu(t)}}-\mu(t) ^{-\frac{B\phi(t)}{\ln \mu(t)}} \right)}{2 \mu(t)} \\
    & = \frac{w_0^{\odot 2}}{2} \circ \left(\mu(t)^{\frac{B\phi(t)}{\ln \mu(t)}-1} -\mu(t)^{-\frac{B\phi(t)}{\ln \mu(t)}-1} \right) \label{eq:beta_hat_A},
\end{align}
where  $w_0 := w_{+}(0) = w_{-}(0) \in \R^d$ is an initialization such that $\omega_{0,j} \neq 0$ for all $j \in [d]$, 
$B \in \R^{d \times n(L-1)}$ is a matrix with columns $B_{il}$,
and $\phi \in \mathbb{R}^{n(L-1)}$ is a vector with entries 
$\phi_{i l}(t) = -\frac{2}{n}\int_{0}^{t}  \exp \left(-\gamma_{i}^{\top} \S\left(g_{i}(\tau)\right)\right)\left(\Sigma\left(g_{i}(\tau)\right) \gamma_{i}\right)_{l} \diff \tau$.
We show that $\hat \beta(t)$ satisfies conditions \ref{approxC1_A}, \ref{approxC2_A} and \ref{approxC3_A} for large enough $t$.

\subsubsection*{KKT Condition \ref{approxC2_A}.}
Recall that 
\begin{align}
    \nabla_{\beta} \mathcal{L}(\beta(t)) =\frac{1}{n} \sum_{i=1}^{n} \sum_{l \neq   \opt(i)}\exp \left(-\gamma_{i}^{\top} \S\left(g_{i}(t)\right)\right)\left(\Sigma\left(g_{i}(t)\right) \gamma_{i}\right)_{l} B_{i l}.
\end{align}
Notice that $\exp(-\gamma_{i}^{\top} \S\left(g_{i}\right)) \geq \exp(-\gamma_{i, \opt(i)})$ and $\lim_{t \to \infty}\left\|\nabla_{\beta} \mathcal{L}(\beta(t))\right\|_{2} = 0$ by Lemma \ref{lem:divergenceGF}.   
Then, for $\forall i, l \neq  \opt(i)$, it holds that $\lim_{t \rightarrow \infty}\left(-\Sigma\left(g_{i}(t)\right) \gamma_{i}\right)_{l} = 0$.
By  \eqref{eq:softmax_der_A}, it implies that either
\begin{align}
    \lim_{t \rightarrow \infty}\S\left(g_{i}(t)\right)_l = 0 \ \ \text { or } \quad \lim_{t \rightarrow \infty}\S\left(g_{i}(t)\right)_{ \opt(i)} = 0 \label{eq:softmax_convergence_A}
\end{align}
should be satisfied for all $ l \neq  \opt(i)$ and $i \in [n]$.

Under Assumptions  \ref{item:(A1)} and \ref{item:(A2)}, for all $i \in[n]$, we show that 
\begin{align}
\lim _{t \rightarrow \infty} \S\left(g_{i}(t)\right)_{ \opt(i)}=1,
\end{align}
which is equivalent to $\lim _{t \rightarrow \infty} \S\left(g_{i}(t)\right)_l=0$ for all $l \neq  \opt(i)$ and $i \in[n]$.

We prove the above by contradiction. Suppose there exists $j \in [n]$ such that $\lim _{t \rightarrow \infty} \S\left(g_{j}(t)\right)_{ \opt(i)} \neq 1$,
which is equivalent to 
\begin{align}
\lim _{t \rightarrow \infty} \S\left(g_{j}(t)\right)_{ \opt(i)}=0 \label{eq:softmax_convergence_A2}
\end{align}
by \eqref{eq:softmax_convergence_A}. 
Consider the loss:
\begin{align}
\lim _{t \rightarrow \infty} \mathcal{L}(\beta(t))&=\lim _{t \rightarrow \infty} \frac{1}{n} \sum_{i=1}^{n} \exp \left(-\gamma_{i}^{\top} \S\left(g_{i}(t)\right)\right) \\
& \stackrel{(i)}{=} \lim _{t \rightarrow \infty} \frac{1}{n} \sum_{i=1}^{n} \exp \left(-\left(\gamma_{i,  \opt(i)}-\gamma_{i,\nopt}\right) \S\left(g_{i}(t)\right)_{ \opt(i) }-\gamma_{i,\nopt} \right) \\
& \geq \frac{\exp (-\gamma_{i,\nopt})+\sum_{i\neq j} \exp \left(-\gamma_{i,  \opt(i)}\right)}{n} \\
& \geq \min_{j \in [n]} \frac{\exp (-\gamma_{j,\nopt})+\sum_{i\neq j} \exp \left(\gamma_{i,  \opt(i)}\right)}{n},
\end{align}
where $(i)$ follows from $\gamma_{il}=\gamma_{i,\nopt}$ and $\sum_{l=1}^{L} \S_{l}\left(g_{i}\right)=1.$
By Assumption \ref{item:(A2)}, we have $\lim _{t \rightarrow \infty} \mathcal{L}(\beta(t)) \geq \mathcal{L}(\beta(0))$. 
However, this contradicts to  \eqref{eq:loss_decrease_A} 
that the loss decreases.

By \eqref{eq:softmax_convergence_A2}, it holds that for any $i$, $l \neq  \opt(i)$,
\begin{align}
\lim _{t \rightarrow \infty} \frac{\S\left(g_{i}(t)\right)_{ \opt(i)}}{\S\left(g_{i}(t)\right)_l}=\lim _{t \rightarrow \infty} \exp \left( \beta(t)^\top\left((x_{i, \opt(i)}-x_{i l}) \circ z_{i} \right)\right) = \infty,
\end{align}
equivalently,
\begin{align}
\lim _{t \rightarrow \infty}\beta(t)^\top \left(\left(x_{i, \opt(i)}-x_{i l}\right) \circ z_{i}\right)  =\lim _{t \rightarrow \infty} \beta(t)^\top B_{il}=\infty .
\end{align}
Since $\lim _{t \rightarrow \infty} B_{il}^{\top} \beta(t)=\infty$ for all $l \neq  \opt(i)$ and $i \in [n]$, we have $\lim _{t \rightarrow \infty} \mu(t)=\infty$ 
and for large enough $t$,

\begin{align}\label{eq:tilde_beta_A}
\frac{ \beta(t)^\top B_{il}}{\mu(t)} \geq \frac{\min _{i, l\neq  \opt(i)} \beta(t)^\top B_{i l}}{\mu(t)}=1 .
\end{align}
It follows from\eqref{eq:tilde_beta_A} that $\hat{\beta}$ is a feasible solution to  \eqref{eq:BLSVM}. 
Moreover, for large enough $t$, $\hat{\beta}(t)$ is a  feasible point of \eqref{eq:BLSVM}. 
\subsubsection*{KKT Condition \ref{approxC3_A}.}

We construct the dual variable of $\hat \beta(t)$ as $\lambda(t) \in \R^n_{\geq 0}$ given by
\begin{align}
\lambda_{i l}(t):=\frac{-2}{n \ln \mu(t)} \int_{0}^{t} \exp \left(-\gamma_{i}^\top \S\left(g_{i}(\tau)\right)\right)\left(\Sigma\left(g_{i}(\tau)\right) \gamma_{i}\right)_{l} \diff \tau = \frac{1}{\ln \mu(t)} \phi_{i l}(t).
\end{align}
By \eqref{eq:softmax_der2_A}, we have $-\left(\Sigma\left(g_{i}\right) \gamma_{i}\right)_{l}>0$ for all $i, l \neq  \opt(i)$. 
Hence $\phi_{il}(t)>0$ and $\lambda_{i l}(t) \geq 0$ for all finite time $t$.
Next, we show that this construction gives us condition \ref{approxC3_A}.
To do so, 
we define the support vector index set for $\hat \beta^\infty$ as 
\begin{align}
\Gamma =\left\{(i, l) \in[n] \times[L]\ |\ l \neq\right.  \opt(i), \ \left.B_{i l}^{\top} \hat{\beta}^\infty=1\right\},
\end{align}
and the non-support vector index set as 
\begin{align}
 \Gamma^{c}=\left\{(i, l) \in[n] \times[L]\ | \ l \neq  \opt(i), \ B_{i l}^{\top} \hat{\beta}^\infty>1\right\}.
\end{align}
To prove  condition \ref{approxC3_A}, we show that $\lambda_{il}(t)$ is bounded above for all $(i, l) \in \Gamma \cup \Gamma^c$, and 
\begin{align}
\operatorname{limsup}_{t \rightarrow \infty} \lambda_{i l}(t)=0 \quad \forall (i, l) \in \Gamma^{c}.
\end{align}
In particular,  for the upper bound  $\lambda_{il}(t)$, we derive a lower bound for $\ln \mu(t)$ and an upper bound for $\phi_{i l}$.
Since $\exp \left(-\gamma_{i}^\top \S\left(g_{i}(t)\right)\right)$ and $\S\left(g_{i}(t)\right)_{ \opt(i)}$ are bounded above by $1$, 
\begin{align} \label{eq:upperbound_lambda_A}
\phi_{i l}(t) & \leq \frac{2\left(\gamma_{i, \opt(i)}-\gamma_{i,\nopt}\right)}{n} \int_{0}^{t} \S\left(g_{i}(\tau)\right)_l \diff \tau.
\end{align}
For the lower bound for $\ln \mu(t)$, we note that the existence of limit point $\hat \beta^\infty$ implies that $\mu(t)$ and $\|\beta(t)\|_{2}$ have the 
same order of $t$ (i.e., $\mu(t) \asymp\|\beta(t)\|_{2}$).
Hence, there exists a large enough $t_0$ such that for $t \geq t_{0}$,  
$$
\ln \mu(t) \geq C_{1} \ln \|\beta(t)\|_{2}
$$
holds for some constant $C_{1}>0$.
It suffices to derive a lower bound for $\ln \|\beta(t)\|_{2}$. 
 $\|\beta(t)\|_2$  can be  bounded  from below as follows:
\begin{align}
\|\beta(t)\|_{2} &= \frac{1}{2}\left\|w_{+}^{\odot 2}(t)-w_{-}^{\odot 2}(t) \right\|_{2} \\
&\geq c \bigg\|\exp \bigg( \frac{2}{n} \int_{0}^{t} \sum_{i=1}^{n} \sum_{l\neq\opt(i)} \ell_i(\tau) (\gamma_{i, \opt(i)}-\gamma_{i,\nopt}) \S(g_{i}(\tau))_{\opt(i)}  \S(g_{i}(\tau))_l B_{i l} \diff\tau ) \\ 
&\qquad\qquad - \exp \bigg( -\frac{2}{n} \int_{0}^{t} \sum_{i=1}^{n}\sum_{l \neq \opt(i)} \ell_i(\tau) (\gamma_{i, \opt(i)}-\gamma_{i,\nopt}) \S(g_{i}(\tau))_{\opt(i)} \S(g_{i}(\tau))_l B_{i l}\diff\tau \bigg)\bigg\|_{2}  \\
&\stackrel{(iii)}{\geq} C \bigg\|\exp \bigg(\bigg| \frac{2}{n} \int_{0}^{t} \sum_{i=1}^{n}\sum_{l \neq \opt(i)} \ell_i(\tau) (\gamma_{i,\opt(i)}-\gamma_{i,\nopt}) \S(g_{i}(\tau))_{\opt(i)}  \S(g_{i}(\tau))_l B_{i l} \diff\tau\bigg| \bigg\|_{2} \\
&=C \bigg(\sum_{j=1}^{d} \exp \bigg(\bigg|\frac{4}{n} \int_{0}^{t} \sum_{i=1}^{n}\sum_{l \neq \opt(i)} \ell_i(\tau) (\gamma_{i, \opt(i)}-\gamma_{i,\nopt}) \S(g_{i}(\tau))_{\opt(i)} \S(g_{i}(\tau))_l B_{i l} \diff\tau\bigg|_{j} \bigg)\bigg)^{\frac{1}{2}},
\end{align}
where $c>0$ and $C>0$ are constants, and $\ell_i(\tau)= \exp \left(-\gamma_{i}^\top \S\left(g_{i}(\tau)\right)\right)$. Inequality $(iii)$ follows from that $\abr{e^{x}-e^{-x}} \geq c e^{|x|}$ for some $c>0$ and sufficiently large $|x|$.
Taking the logarithm on both sides, we have
\begin{align*}
& \ln \|\beta(t)\|_{2} \\
& \geq \frac{1}{2} \ln \left(\sum_{j=1}^{d} \exp \left(\abr{ \frac{4}{n} \int_{0}^{t} \sum_{i=1}^{n} \sum_{l \neq  \opt(i)} \ell_i(\tau)\left(\gamma_{i, \opt(i)}-\gamma_{i,\nopt}\right) \S\left(g_{i}(\tau)\right)_{ \opt(i)}  \S\left(g_{i}(\tau)\right)_l B_{i l}\diff \tau}_{j} \right) \right)+ \ln C \\
& = \frac{1}{2} \ln \left(d \cdot \frac{1}{d} \sum_{j=1}^{d} \exp \left(\abr{ \frac{4}{n} \int_{0}^{t} \sum_{i=1}^{n}\sum_{l \neq  \opt(i)} \ell_i(\tau)\left(\gamma_{i, \opt(i)}-\gamma_{i,\nopt}\right) \S\left(g_{i}(\tau)\right)_{ \opt(i)}  \S\left(g_{i}(\tau)\right)_l B_{i l}\diff \tau}_{j} \right)\right)+ \ln C \\
& \stackrel{(iv)}{\geq} \frac{1}{2 d} \sum_{j=1}^{d} \abr{ \frac{2}{n} \int_{0}^{t} \sum_{i=1}^{n}\sum_{l \neq  \opt(i)} \ell_i(\tau)\left(\gamma_{i, \opt(i)}-\gamma_{i,\nopt}\right) \S\left(g_{i}(\tau)\right)_{ \opt(i)}  \S\left(g_{i}(\tau)\right)_l B_{i l}\diff \tau}_{j}  +\frac{1}{2} \ln d+\ln C \\
& \stackrel{(v)}{\geq} \frac{1}{2 d} \quad\left\|\frac{4}{n} \int_{0}^{t} \sum_{i=1}^{n}\sum_{l \neq  \opt(i)} \ell_i(\tau)\left(\gamma_{i, \opt(i)}-\gamma_{i,\nopt}\right) \S\left(g_{i}(\tau)\right)_{ \opt(i)}  \S\left(g_{i}(\tau)\right)_l B_{i l}\diff \tau \right\|_{2}+\frac{1}{2} \ln C^2d \\
& =\frac{2}{ d} \max _{\|a\|_{2}=1}\left\langle \int_{0}^{t} \nabla_{\beta}\calL(\beta(\tau)) \diff \tau, a\right\rangle+\frac{1}{2} \ln C^2d \\
& \stackrel{(vi)}{\geq} \frac{2}{ d} \max \int_{0}^{t} \left\langle \nabla_{\beta} \calL(\beta(\tau)), \bar\beta_{f}\right\rangle \diff \tau +\frac{1}{2} \ln C^2d \\
& =\frac{2}{nd}  \int_{0}^{t} \sum_{i=1}^{n} \sum_{l \neq  \opt(i)} \ell_i(\tau)\left(\gamma_{i, \opt(i)}-\gamma_{i,\nopt}\right) \S\left(g_{i}(\tau)\right)_{ \opt(i)}  \S\left(g_{i}(\tau)\right)_l m_{i l}\diff \tau +\frac{1}{2} \ln C^2d\\
& \stackrel{(vii)}{\geq}  C_{2}(t) \int_{0}^{t} \sum_{i=1}^{n} \sum_{l \neq  \opt(i)} \S\left(g_{i}(\tau)\right)_l \diff \tau+\frac{1}{2} \ln C^2d. 
\end{align*}
Inequality $(iv)$ follows from the Jensen's inequality, and $(v)$ follows from $\|x\|_{1} \geq \|x\|_{2}$.
For  $(vi)$, we choose $\bar{\beta}_{f}$  such that $\|\bar{\beta}_{f}\|_{2}=1,$ $\bar\beta_f^\top B_{i l} = m_{i l} > 0$ for all $i, l \neq  \opt(i)$.
For $(vii)$, we choose $ C_{2}(t)=\min _{i, l, \tau \in[0,t]}\left\{\frac{2}{nd} \ell_i(\tau)\left(\gamma_{i, \opt(i)}-\gamma_{i,\nopt}\right) \S\left(g_{i}(\tau)\right)_{ \opt(i)} m_{i l} \right\}$.

Since $\ell_{i}(t)$ and $\S\left(g_{i}(t)\right)_{\opt(i)}$ are bounded below by some positive constant for any $t$, we have
$\lim _{t \rightarrow \infty} C_{2}(t) = C_2>0.$
Hence, we have
\begin{align} \label{eq:lowerbound_lambda_A}
\ln \|\beta(t)\|_{2} \geq C \int_{0}^{t} \sum_{i=1}^{n} \sum_{l \neq  \opt(i)} \S\left(g_{i}(\tau)\right)_l \diff \tau
\end{align}
for sufficiently large $t$ and some constant $C>0$.
Combining \eqref{eq:upperbound_lambda_A} and  \eqref{eq:lowerbound_lambda_A}, for all $i\in n$ and $l \neq  \opt(i)$, we have
\begin{align*}
    \lambda_{i l}(t) = \frac{\phi_{i l}(t)}{\ln \mu(t)} \leq C \frac{ \int_{0}^{t} \S\left(g_{i}(\tau)\right)_l \diff \tau}{\int_{0}^{t} \sum_{i=1}^{n} \sum_{l \neq  \opt(i)} \S\left(g_{i}(\tau)\right)_l \diff \tau}
\end{align*}
for sufficiently large $t$ and some constant $C>0$.

Note that $\lim_{t \rightarrow \infty}\int_{0}^{t} $ $
\sum_{i=1}^{n} \sum_{l \neq  \opt(i)} \S\left(g_{i}(\tau)\right)_l \diff \tau$ goes to infinity by
$\lim_{t \rightarrow \infty}\|\beta(t)\|_{2} = \infty$ and \eqref{eq:beta}.
Moreover, it follows from  $\hat \beta(t)^\top B_{il} \to \hat\beta^\infty(t)^\top B_{il}$ that there exists time $t_0$ such that for $t>t_0$, $\hat \beta(t)^\top B_{il} >1$ holds for all $(i, l) \in \Gamma^{c}$. 
 Then, we have $\lim _{t \rightarrow \infty} \frac{\S\left(g_{i'}(t)\right)_{l'}}{\S\left(g_{i}(t)\right)_l}=\infty$
 (i.e., $\S\left(g_{i}(t)\right)_l =o(\S\left(g_{i'}(t)\right)_{l'})$) 
for all $(i, l) \in \Gamma^{c}$ and $(i',l') \in \Gamma$.

It implies that
\begin{align}
\limsup_{t \rightarrow \infty}\frac{ \int_{0}^{t} \S\left(g_{i}(\tau)\right)_l \diff \tau}{\int_{0}^{t} \sum_{i=1}^{n} \sum_{l \neq  \opt(i)} \S\left(g_{i}(\tau)\right)_l \diff \tau} =0 \qquad \forall (i, l) \in \Gamma^{c}.
\end{align} 
Consequently, $\limsup _{t \rightarrow \infty} \lambda_{i l}(t)=0$ for all $(i, l) \in \Gamma^{c}$.

\subsubsection*{KKT Condition \ref{approxC1_A}.}
We prove the condition \ref{approxC1_A} by constructing $\check \beta(t)$ such that it satisfies $\| \hat \beta(t) - \check \beta(t) \|_2 \leq \eps'(t)$ and the condition \ref{approxC1_A}.
Let $\check \beta_j(t) = \hat \beta_j(t)$ for all $j \in \{ j \in [d] \ | \ \hat \beta^\infty_j \neq 0 \}$ and $\check \beta_j(t) = 0$ for $j \in \{ j \in [d] \ | \ \hat \beta^\infty_j = 0 \}$.
Since $\hat \beta(t) \rightarrow \hat \beta^\infty$, we have $\eps'(t) \rightarrow 0$ as $t \rightarrow \infty$.
We prove that $\eps(t)$ converges to 0. 
 As  derived in \eqref{eq:beta_hat_A}, we have 
\begin{align}
    \hat \beta(t) & = \frac{w_0^{\odot 2}}{2} \circ \left(\mu(t)^{\frac{B\phi(t)}{\ln \mu(t)}-1} -\mu(t)^{-\frac{B\phi(t)}{\ln \mu( t)}-1} \right) \\
    & = \frac{w_0^{\odot 2}}{2} \circ \left(\mu(t)^{B\lambda(t)-1} -\mu(t)^{-B\lambda(t)-1} \right).
\end{align}

Consider the index $j \in \{j\in [d] \ |\  \hat \beta^\infty_j = C_j > 0 \}$. Let $a_j(t):= (B\lambda(t))_j$ and $b_j(t) := \mu(t)^{a_j(t)}$. The $j$th component of $\check \beta(t)$ can be written as
\begin{align}
    \check \beta_j(t) = \hat \beta_j(t) = \frac{w_j^2(0)}{2}\left(b_j(t)\mu(t)^{-1}-b_j(t)^{-1}\mu(t)^{-1} \right).
\end{align}
Equivalently,
\begin{align} \label{eq:beta_j_positive}
 b_j(t)\mu(t)^{-1}-b_j(t)^{-1}\mu(t)^{-1}  = \frac{2\hat \beta_j(t)}{w_j^2(0)} := C_j(t),
\end{align}
where $ \lim_{t \rightarrow \infty} C_j(t) = 2C_j/w_j^2(0)$.
If we represent \blue{ \eqref{eq:beta_j_positive} } in terms of $b_j(t)$, 
\begin{align}
    b_j(t) = \frac{C_j(t)\mu(t) + \sqrt{C_j^2(t)\mu^2(t) +4}}{2}.
\end{align}
Taking the logarithm on both sides, we obtain
\begin{align}
    a_j(t) = \frac{\ln(C_j(t)\mu(t) + \sqrt{C_j^2(t)\mu^2(t) +4}) - \ln 2}{\ln \mu(t)}.
\end{align}
Since $\lim_{t \rightarrow \infty} \mu(t) = \infty$, we have $\lim_{t \rightarrow \infty}a_j(t) = 1$.
Note that for sufficiently large $t>t_0$, $C_j(t) > 0$ and $\partial\|\check \beta(t)\|_1 = 1$. 
Then, it holds that 
\begin{align}
      \lim_{t \rightarrow \infty}\eps_j(t) = \lim_{t \rightarrow\infty}\abr{a_j(t)-\partial\|\check \beta(t)\|_1} = \lim_{t \rightarrow \infty}\abr{a_j(t)-1} = 0. 
\end{align}

Similarly, consider the index $j \in \{j\in [d] \ |\  \hat \beta_j^\infty = -C_j < 0 \}$.
We have the following:
\begin{align}
    & b_j(t)\mu(t)^{-1}-b_j(t)^{-1}\mu(t)^{-1} = \frac{2\check \beta_j(t)}{w_j^2(0)} := -C_j(t), \\
    & b_j(t) = \frac{-C_j(t)\mu(t) + \sqrt{C_j^2(t)\mu^2(t) +4}}{2} = \frac{2}{C_j(t)\mu(t) + \sqrt{C_j^2(t)\mu^2(t) +4}},  \\
    & a_j(t) = -\frac{\ln\left(C_j(t)\mu(t) + \sqrt{C_j^2(t)\mu^2(t) +4}\right) - \ln 2}{\ln \mu(t)}.  
\end{align}
Then, it follows that $\lim_{t \rightarrow \infty}a_j(t) = -1$ and $\lim_{t \rightarrow \infty}\eps_j(t) = 0$.

Finally, consider the index $j \in \{j\in [d] \ |\  \hat \beta_j^\infty =  0 \}$. 
Since $\check \beta_j(t) =0$ and 
$ \lim_{t \rightarrow \infty}\hat \beta_j(t) = \lim_{t \rightarrow \infty}\frac{w_j^2(0)}{2}\left(\mu(t)^{(B\lambda(t))_j-1} -\mu(t)^{-(B\lambda(t))_j-1} \right)$ = 0,
it implies that for sufficiently large $t$, $(B\lambda(t))_j \in [-1, 1] = \partial\|\check \beta{(t)}\|_1 $. 
Consequently, we have
\begin{align}
    \lim_{t \rightarrow \infty}\eps_j(t) =   \lim_{t \rightarrow\infty}\abr{a_j(t)-\partial\|\check \beta(t)\|_1} = 0.
\end{align}

\end{proof}

\subsection{Proof of Lemma \ref{lem:alignment}}\label{sec:alignment_lem_proof}
For reader's convenience, we restate Lemma \ref{lem:alignment} here.

\alignment*

\begin{proof}

For ease of presentation,
 we consider the case $d = d_e$. The same analysis can be applied to the case $d \geq d_e$.
The gradient flow dynamics of $K$ and $Q$ are given by
\begin{align}
& \dot{K}(t) = -\nabla_{K} \mathcal{L}(K(t), Q(t))=\frac{1}{n} \sum_{i=1}^{n} \exp \left(-\gamma_{i}^{\top} \S\left(g_{i}(t)\right)\right) X_{i}^{\top} \Sigma\left(g_{i}(t)\right) \gamma_{i} z_{i}^{\top} Q(t), \\
& \dot{Q}(t) =-\nabla_{Q} \mathcal{L}(K(t), Q(t))=\frac{1}{n} \sum_{i=1}^{n} \exp \left(-\gamma_{i}^{\top} \S\left(g_{i}(t)\right)\right) z_{i} \gamma_{i}^{\top} \Sigma\left(g_{i}(t)\right) X_{i} K(t).
\end{align}

We denote a vector $\psi_i \triangleq X_{i}^{\top} \Sigma\left(g_{i}\right) \gamma_{i} \in \R^{d}$. Then, it can be expressed as
\begin{align}
    \psi_i & = \sum_{l =1}^L x_{il} (\Sigma\left(g_{i}\right) \gamma_{i})_{l} \\
    & \stackrel{(a)}{=} \sum_{l \neq  \opt(i)} (x_{il}-x_{i, \opt(i)}) (\Sigma\left(g_{i}\right) \gamma_{i})_{l} \\
    & \stackrel{(b)}{=} \sum_{l \neq  \opt(i)} (x_{i, \opt(i)}-x_{il}) \S(g_{i})_{ \opt(i)}\S\left(g_{i}\right)_l (\gamma_{i, \opt(i)}-\gamma_{i,\nopt}), \label{eq:psi}
\end{align}
where $(a)$ follows from  \eqref{eq:softmax_der}. Equality $(b)$ follows from 
\eqref{eq:softmax_der2_A}.
Suppose that $K$ and $Q$ satisfy the alignment property with data $\{(X_{i}, z_{i}, y_{i})\}_{i=1}^n$.
By  Assumption \ref{item:(B1)} (iii),  for each $x_{il}$, there exists a $x_{il'}$ such that 
\begin{align}
    x_{il}^\top W z_i  &= x_{il}^\top U_K \Lambda_K \Lambda_Q U_Q^\top z_i \\
    & = x_{il}^\top U_K \Lambda_K \Lambda_Q e_{\pi(i)} \\
    & = x_{il}^\top u^k_{\pi(i)} \text{diag}(\Lambda_K \Lambda_Q)_{\pi(i)} \\
    & = x_{il}^\top \frac{x_{i, \opt(i)}}{\|x_{i, \opt(i)}\|_2} \text{diag}(\Lambda_K \Lambda_Q)_{\pi(i)} \\
    & = x_{il'}^\top \frac{x_{i, \opt(i)}}{\|x_{i, \opt(i)}\|_2} \text{diag}(\Lambda_K \Lambda_Q)_{\pi(i)} \\
    &= x_{il'}^\top W z_i.  
\end{align}
Consequently, for paired $x_{il}$ and $x_{il'}$, we have
\begin{align}
    \S\left(g_{i}\right)_l= \S\left(g_{i}\right)_{l'}. \label{eq:softmax_equal}
\end{align}
Together with \eqref{eq:psi}, \eqref{eq:softmax_equal} and  Assumption \ref{item:(B1)} (iii), it follows that for all $i$,
\begin{align}
    \frac{\psi_i}{\|\psi_i\|_2} = \frac{x_{i, \opt(i)}}{\|x_{i, \opt(i)}\|_2} = u^k_{\pi(i)}. \label{eq:psi_direction}
\end{align}

Observe the dynamics of $K$ and $Q$ as follows:
\begin{align}
     \dot{K} & =\frac{1}{n} \sum_{i=1}^{n} \exp \left(-\gamma_{i}^{\top} \S\left(g_{i}\right)\right) \psi_i z_{i}^{\top} Q \\
     & = \frac{1}{n} \sum_{i=1}^{n} \exp \left(-\gamma_{i}^{\top} \S\left(g_{i}\right)\right) \psi_i z_{i}^{\top} U_Q \Lambda_Q V^\top \\
     & = \frac{1}{n} \sum_{i=1}^{n} \exp \left(-\gamma_{i}^{\top} \S\left(g_{i}\right)\right) u^k_{\pi(i)} e_{\pi(i)}^\top \|\psi_i\|_2 \|z_i\|_2 \Lambda_Q V^\top \\
     & = \frac{1}{n} \sum_{i=1}^{n} \exp \left(-\gamma_{i}^{\top} \S\left(g_{i}\right)\right)  \|\psi_i\|_2 \|z_i\|_2 \text{diag}(\Lambda_Q)_{\pi(i)}u^k_{\pi(i)} {v_{\pi(i)}}^\top ,
\end{align}
where $e_i \in \R^d$ is a vector with $1$ at the $i$th component and $0$ elsewhere. 
\begin{align}
    \dot{Q} & = \frac{1}{n} \sum_{i=1}^{n} \exp \left(-\gamma_{i}^{\top} \S\left(g_{i}\right)\right) z_i \psi_{i}^{\top} K \\
    & = \frac{1}{n} \sum_{i=1}^{n} \exp \left(-\gamma_{i}^{\top} \S\left(g_{i}\right)\right) z_i \psi_{i}^{\top} U_K \Lambda_K V^\top  \\
    & = \frac{1}{n} \sum_{i=1}^{n} \exp \left(-\gamma_{i}^{\top} \S\left(g_{i}\right)\right) u^q_{\pi(i)} e_{\pi(i)}^\top \|\psi_i\|_2 \|z_i\|_2 \Lambda_K V^\top \\
    & = \frac{1}{n} \sum_{i=1}^{n} \exp \left(-\gamma_{i}^{\top} \S\left(g_{i}\right)\right)  \|\psi_i\|_2 \|z_i\|_2 \text{diag}(\Lambda_K)_{\pi(i)}u^q_{\pi(i)} {v_{\pi(i)}}^\top.
\end{align}
The dynamics of $K$ and $Q$ do not alter the column spaces of $U_K$, $U_Q$ and $V$, respectively when $K$ and $Q$ satisfy the alignment property with data $\{(X_{i}, z_{i}, y_{i})\}_{i=1}^n$.
In particular, consider the initialization $K(0) = U_K(0) \Lambda_K V(0)^\top$ and $Q(0) = U_Q(0) \Lambda_Q V(0)^\top$ such that
$\frac{x_{i, \opt(i)}}{\|x_{i, \opt(i)}\|_2} = u^k_{\pi(i)} $ is the $\pi(i)$-th column of $U_K(0)$ and
$\frac{z_i}{\|z_i\|_2} = u^q_{\pi(i)} $ is the $\pi(i)$-th column of $U_Q(0)$ for all $i \in [n]$ and the rest of the columns of $U_K(0)$ and $U_Q(0)$ are the zero vectors. 
Then, only $\Lambda_K$ and $\Lambda_Q$ will be updated during the training, which leads to $K(t) = U_K(0) \Lambda_K(t) V(0)^\top$ and $Q(t) = U_Q(0) \Lambda_Q(t) V(0)^\top$ for all time $t$.
\end{proof}
\subsection{Proof of Theorem \ref{thm:generaltheorem}} \label{sec:proof_Thm2}
We recall Theorem \ref{thm:generaltheorem}:
\generaltheorem*

\begin{proof}
Under the conditions of  Lemma \ref{lem:alignment},  
it suffices to consider the dynamics of singular values $\Lambda_K$ and $\Lambda_Q$.
Similarly to our previous discussion, let
\begin{align}
& \xi_{k}=\operatorname{diag}(\Lambda_K) \in \mathbb{R}^{d}, \\
& \xi_{q}=\operatorname{diag}(\Lambda_Q) \in \mathbb{R}^{d}, \\
& \beta=\xi_{k} \circ \xi_{q} \in \mathbb{R}^{d}.
\end{align}
The gradient evaluated at $\xi_{k}$ and $\xi_{q}$ are
\begin{align}
& \dot{\xi}_k(t) = -\nabla_{\xi_k} \mathcal{L}\left(K, Q\right)=\frac{1}{n} \sum_{i=1}^{n} \exp \left(-\gamma_{i}^{\top} \S\left(g_{i}(t)\right)\right) \|\psi_i(t)\|_2 \|z_i\|_2e_{\pi(i)} \circ \xi_{q}(t), \\
& \dot{\xi}_q(t) = -\nabla_{\xi_q} \mathcal{L}\left(K, Q\right)=\frac{1}{n} \sum_{i=1}^{n} \exp \left(-\gamma_{i}^{\top} \S\left(g_{i}(t)\right)\right) \|\psi_i(t)\|_2 \|z_i\|_2e_{\pi(i)} \circ \xi_{k}(t), 
\end{align}
where  $\psi_i(t) = X_{i}^{\top} \Sigma\left(g_{i}(t)\right) \gamma_{i} \in \R^{d}$ and $e_i \in \R^d$ is a vector with $1$ at the $i$th component and $0$ elsewhere.

Consider the dynamics of the alternative reparameterization $\beta = \frac{w_+^{\odot 2}- w_-^{\odot 2}}{2}$ 
with the initialization $w_0 = w_+(0) = w_-(0)$ that satisfies the conditions of Lemma \ref{lem:equivalenceparameterization}. 
The corresponding dynamics of $w_+$ and $w_-$ are
\begin{align}
    \dot{w}_+(t) &= \frac{1}{n} \sum_{i=1}^{n} \exp \left(-\gamma_{i}^{\top} \S\left(g_{i}(t)\right)\right) \|\psi_i(t)\|_2 \|z_i\|_2e_{\pi(i)} \circ w_+(t), \\
    \dot{w}_-(t) &= -\frac{1}{n} \sum_{i=1}^{n} \exp \left(-\gamma_{i}^{\top} \S\left(g_{i}(t)\right)\right) \|\psi_i(t)\|_2 \|z_i\|_2e_{\pi(i)} \circ w_-(t).
\end{align}
Then, the dynamics of $\beta$ is
\begin{align}
    \beta(t) &= w_0^{\odot 2} \circ \sinh \left(\frac{2}{n} \int_{0}^{t} \sum_{i=1}^{n} \exp \left(-\gamma_{i}^{\top} \S\left(g_{i}(\tau)\right)\right)\|\psi_i(\tau)\|_2 \|z_i\|_2e_{\pi(i)} \diff \tau\right).
\end{align}

Since $K(t)$ and $Q(t)$ satisfy the alignment property for all $t$,
by \eqref{eq:psi_direction} and  Assumption \ref{item:(B1)} $(iv)$, it holds that $\frac{\psi_i(t)}{\|\psi_i(t)\|_2} = \frac{x_{i, \opt(i)}}{\|x_{i, \opt(i)}\|_2} = \pm \frac{z_i}{\|z_i\|_2}$ for all $i$ and $t$.
Then, we have
\begin{align}
    &\|\psi_i(t)\|_2 \|z_i\|_2  \\
    & = \sign({\la x_{i, \opt(i)}, z_i \ra)}\la \psi_i(t), z_i \ra \\
    & = \sign({\la x_{i, \opt(i)}, z_i \ra})\!\! \sum_{l \neq  \opt(i)}\! \S_{ \opt(i)}(g_{i}(t))\S_l\left(g_{i}(t)\right) (\gamma_{i, \opt(i)}-\gamma_{i,\nopt}) \la x_{i, \opt(i)}-x_{il}, z_i \ra, \label{eq:psi_z_norm}
\end{align}
where $\sign(\cdot)$ is a sign function.
Let $B_{il}  = \sign{(\la x_{i, \opt(i)},z_i\ra)} \la x_{i, \opt(i)}-x_{il}, z_i \ra e_{\pi(i)} \in \R^d$.
By  \eqref{eq:psi_z_norm}, the dynamics of $\beta$ can be written as 
\begin{align}
    \beta(t) &= w_0^{\odot 2} \circ \sinh \left(\frac{2}{n} \int_{0}^{t} \sum_{i=1}^{n} \sum_{l \neq  \opt(i)} \exp \left(-\gamma_{i}^{\top} \S\left(g_{i}( \tau)\right)\right)   \S(g_{i}( \tau))_{ \opt(i)}\S\left(g_{i}( \tau)\right)_l (\gamma_{i, \opt(i)}-\gamma_{i,\nopt}) B_{il} \diff \tau\right),
\end{align}
which has the same form as in \eqref{eq:beta_A}. By the analogous argument in the proof of Theorem \ref{thm:diagonaltheorem}, $\hat\beta^\infty = \lim_{t \rightarrow \infty}\frac{\beta(t)}{\min_{i,l \neq  \opt(i)}\beta(t)^\top B_{il}}$ is a global solution to
\begin{align} 
     \ \min _{\beta \in \mathbb{R}^{d}} \|\beta\|_{1} \quad \operatorname{ s.t. } \quad  \beta^{\top} B_{i l} \geq 1  \quad \forall \  l \neq \opt(i), \ i \in [n], \label{eq:BLSVM_A}
\end{align}
where $B_{il} = \sign{(\la x_{i, \opt(i)},z_i\ra)} \la x_{i, \opt(i)}-x_{il}, z_i \ra e_{\pi(i)} \in \R^d$.

Now, we show that if $\hat \beta^\infty$ is a global solution to \eqref{eq:BLSVM_A}, 
then $\hat W^\infty = \lim_{t \rightarrow \infty}\frac{U_K(0) \operatorname{Diag}(\beta(t)) U_Q(0)^\top}{\mu(t)}$ is a global solution to \eqref{eq:WNSVM}.
First,  for all $i,l \neq  \opt(i)$ and $t$, it holds that
\begin{align}
     \beta(t)^\top B_{il} &= \sign{(\la x_{i, \opt(i)},z_i\ra)} \la x_{i, \opt(i)}-x_{il}, z_i \ra \beta_{\pi(i)}(t) \\
    &= \left\la x_{i, \opt(i)}-x_{il}, \frac{x_{i, \opt(i)}}{\|x_{i, \opt(i)}\|_2}\right\ra \|z_i\|_2 \beta_{\pi(i)}(t) \\
    &= (x_{i, \opt(i)}-x_{il})^\top U_K(0) \operatorname{Diag}(\beta(t)) U_Q(0)^\top z_i \\
    &= (x_{i, \opt(i)}-x_{il})^\top W(t) z_i,
\end{align}
and 
\begin{align}
    U_K(0) \operatorname{Diag}(\hat \beta^\infty) U_Q(0)^\top &= \lim_{t \rightarrow \infty} \frac{U_K(0) \operatorname{Diag}(\beta(t)) U_Q(0)^\top}{\min_{i,l \neq  \opt(i)}\beta(t)^\top B_{il} }\\
&= \lim_{t \rightarrow \infty} \frac{U_K(0) \operatorname{Diag}(\beta(t)) U_Q(0)^\top}{\min_{i,l \neq  \opt(i)}(x_{i, \opt(i)}-x_{il})^\top W(t) z_i} \\
& = \hat W^\infty.
\end{align}
Hence, the constraints of \eqref{eq:BLSVM_A} and \eqref{eq:WNSVM} are equivalent with $\hat \beta^\infty$ and $\hat W^\infty$, respectively.

The stationary conditons of \eqref{eq:BLSVM_A} and \eqref{eq:WNSVM} are derived as follows:
\begin{align}
    \partial \|\beta\|_1 &\ni \sum_{i, l \neq  \opt(i)} \lambda_{il}B_{il} \\
    \partial \|W\|_* &\ni \sum_{i, l \neq  \opt(i)} \lambda_{il} (x_{i, \opt(i)}-x_{il})z_i^\top.
\end{align}
The subdifferential nuclear norm is defined as
\begin{align}
    \partial \|W\|_* = \cbr{UV^\top + E \ | \ P_U E = 0, \  P_V E = 0, \ \|E\|_{spec} \leq 1},
\end{align}
where $P_U$ and $P_V$ are the projection matrices onto the column spaces of $U$ and $V$, respectively and $\|E\|_{spec}$ is the spectral norm of $E$.

We first show that $\sum_{i,l\neq \opt(i)} \lambda_{il}(t) U_K(0) \operatorname{Diag}(B_{il}) U_Q(0)^\top$ $=\sum_{i, l \neq  \opt(i)} \lambda_{il}(t) (x_{i, \opt(i)}-x_{il})z_i^\top$
for all $t$.
Recall we let $\lambda_{il}(t) = \frac{2}{n\ln \mu(t)} \phi_{il}(t)$.
By Assumption \ref{item:(B1)} $(iii)$ and 
\eqref{eq:softmax_equal}, for each $x_{il}$, there exists unique $x_{il'}$ such that
$\S_l(g_i(t)) = \S_{l'}(g_i(t))$ for all $l \neq l'$ and $t$.  It follows from the definition of $\lambda_{il}(t)$ that $\lambda_{il}(t) = \lambda_{il'}(t)$ for all $t$.
Then, for such $x_{il}$ and $x_{il'}$, we have
\begin{align}
    &\lambda_{il}(t)(x_{i, \opt(i)}-x_{il}) + \lambda_{il'}(t)(x_{i, \opt(i)}-x_{il'}) \\
    & = \lambda_{il}(t)(2x_{i, \opt(i)}-x_{il} -x_{il'}) \\
    & = \lambda_{il}(t)(2x_{i, \opt(i)}-P_{x_{i, \opt(i)}}x_{il}-P_{x_{i, \opt(i)}}x_{il'}) \\
    & = \lambda_{il}(t)(x_{i, \opt(i)}-P_{x_{i, \opt(i)}}x_{il}) + \lambda_{il'}(t)(x_{i, \opt(i)}-P_{x_{i, \opt(i)}}x_{il'}) \\
    & = \lambda_{il}(t)\left\la x_{i, \opt(i)}-x_{il}, \frac{x_{i, \opt(i)}}{\|x_{i, \opt(i)}\|_2}\right\ra\frac{x_{i, \opt(i)}}{\|x_{i, \opt(i)}\|_2} + \lambda_{il'}(t)\left\la x_{i, \opt(i)}-x_{il'}, \frac{x_{i, \opt(i)}}{\|x_{i, \opt(i)}\|_2}\right\ra\frac{x_{i, \opt(i)}}{\|x_{i, \opt(i)}\|_2}. \label{eq:lambda_equivalence}
\end{align}

By   \eqref{eq:lambda_equivalence} and the definition of $B_{il}$, we have
\begin{align}
    \sum_{i,l\neq \opt(i)} \lambda_{il}(t) U_K(0) \operatorname{Diag}(B_{il}) U_Q(0)^\top & = \sum_{i,l\neq \opt(i)} \lambda_{il}(t) \left\la x_{i, \opt(i)}-x_{il}, \frac{x_{i, \opt(i)}}{\|x_{i, \opt(i)}\|_2}\right\ra\frac{x_{i, \opt(i)}}{\|x_{i, \opt(i)}\|_2}z_i^\top \\
    & = \sum_{i,l\neq \opt(i)} \lambda_{il}(t) (x_{i, \opt(i)}-x_{il})z_i^\top. \label{eq:stationary_equivalence1}
\end{align}

By Theorem 2 in \cite{Watson1992CharacterizationOT}, we have
\begin{align} \label{eq:stationary_equivalence2}
    \partial \|W\|_* = \text{conv}\cbr{U\Diag(d)V^\top \ |  \ W = U\Diag(\beta)V^\top \ d \in \partial\|\beta\|_1 },
\end{align}
where $\text{conv}\{\cdot\}$ denotes the convex hull of a set and $U,V$ are orthogonal matrices.
It follows from \eqref{eq:stationary_equivalence1}, \eqref{eq:stationary_equivalence2} and Theorem 2 in \cite{Watson1992CharacterizationOT}  that 
if $\hat \beta^\infty$ satisfies the stationary condition of \eqref{eq:BLSVM_A}, then $\hat W^\infty$ satisfies the stationary condition of \eqref{eq:WNSVM}.

\end{proof}

\end{appendices}

\end{document}